\documentclass[journal]{IEEEtran}
\IEEEoverridecommandlockouts
\usepackage{caption}
\usepackage{amsmath,amssymb,amsfonts}
\usepackage{amsthm}
\usepackage{graphicx}
\usepackage{textcomp}
\usepackage[pscoord]{eso-pic}
\usepackage{xcolor}
\usepackage{subfigure}
\usepackage{diagbox}
\usepackage{easymat}
\usepackage{listings}
\usepackage{booktabs}
\newtheorem{theory}{Theorem}
\usepackage{algorithm}  
\usepackage{algpseudocode}  
\usepackage{amsmath}  
\usepackage{multirow}
\newlength{\halfpagewidth}
\setlength{\halfpagewidth}{\linewidth}
\divide\halfpagewidth by 2

\def\BibTeX{{\rm B\kern-.05em{\sc i\kern-.025em b}\kern-.08em
    T\kern-.1667em\lower.7ex\hbox{E}\kern-.125emX}}
\begin{document}



\title{Multilayer Perceptron Based Stress Evolution Analysis under DC Current Stressing \\ for Multi-segment Wires}



\author{Tianshu Hou, Peining Zhen, Ngai Wong, Quan Chen, Guoyong Shi, Shuqi Wang, Hai-Bao Chen
\thanks{This work is supported in part by the National Key Research and Development Program of China under grant 2019YFB2205005, and in part by the Nature Science Foundation of China (NSFC) under No. 62034007. Corresponding author: Hai-Bao Chen.}
\thanks{Tianshu Hou, Peining Zhen, Guoyong Shi, Shuqi Wang and Hai-Bao Chen are with the Department of Micro/Nano Electronics, Shanghai Jiao Tong University. Ngai Wong is with the Department of Electrical and Electronic Engineering, University of Hong Kong. Quan Chen is with the School of Microelectronics, Southern University of Science and Technology.}}

\maketitle

\begin{abstract}
Electromigration (EM) is one of the major concerns in the reliability analysis of very large scale integration (VLSI) systems due to the continuous technology scaling. Accurately predicting the time-to-failure of integrated circuits (IC) becomes increasingly important for modern IC design. However, traditional methods are often not sufficiently accurate, leading to undesirable over-design especially in advanced technology nodes. In this paper, we propose an approach using multilayer perceptrons (MLP) to compute stress evolution in the interconnect trees during the void nucleation phase. The availability of a customized trial function for neural network training holds the promise of finding dynamic mesh-free stress evolution on complex interconnect trees under time-varying temperatures. Specifically, we formulate a new objective function considering the EM-induced coupled partial differential equations (PDEs), boundary conditions (BCs), and initial conditions to enforce the physics-based constraints in the spatial-temporal domain. The proposed model avoids meshing and reduces temporal iterations compared with conventional numerical approaches like FEM. Numerical results confirm its advantages on accuracy and computational performance.    
\end{abstract}

\begin{IEEEkeywords}
Electromigration, trial function, complex interconnect tree, multilayer perceptron, dynamic temperature.
\end{IEEEkeywords}

\section{introduction}
Electromigration (EM) reliability analysis has become a significant design consideration in very large scale integration (VLSI) systems due to the escalating current densities in interconnects resulted from technology scaling \cite{Warnock-5981968}. EM-induced voiding processes cause the increase of interconnect resistance, leading to degradation or potential destruction of circuit functionalities. Thus, it is important to develop accurate and efficient EM effect failure assessment methods for VLSI chips in 7-nm technology and below. However, the traditional Black's model \cite{Black1969} and Blech's effect model \cite{blech1976} target stress evolution prediction on single metal wires only, causing high prediction errors and excessive design margins. 


Several physics-based methods have been proposed recently~\cite{chatterjee2018:TCAD,chen2020:tcad,chen2015:aspdac,chen2015:dac,Sukharev2016:TDMR}. 
It should be noted that the major challenge of EM analysis in the physics-based methods is to solve stress diffusion induced partial differential equations (PDEs) governed by Korhonen's equation~\cite{Korhonen1993} with complex boundary conditions (BCs) and initial conditions.
In \cite{XHuang2014:DAC} and \cite{huang2016:tcad}, a new physics-based approach was proposed for EM assessment in power delivery networks of VLSI. The approach extended the reliability analysis of single metal wires to multi-segment interconnects to obtain the projected steady-state stress \cite{Sun2016:ACM}.
The mesh-based numerical methods such as the finite difference method (FDM) and the finite element method (FEM) can solve the PDEs arising from complex on-chip interconnect topologies but require a significant number of unknown variables due to the spatial and temporal discretization.
Moreover, an analytical solution of stress evolution for simple multi-branch interconnect trees was developed to predict dynamic stress evolution during the void nucleation phase under time-varying temperature \cite{chen2016:CDICS,Chen2017:TDMR}. The method constructed a basis function based on Laplace transformation, which provides new insights for EM reliability analysis. In \cite{Chen2019:VLSI}, the proposed method modified the accelerated separation of variables (ASOV) for describing dynamic stress evolution under constant temperature. 

\begin{figure}[t]
	\centering
	\includegraphics[width=0.8\columnwidth]{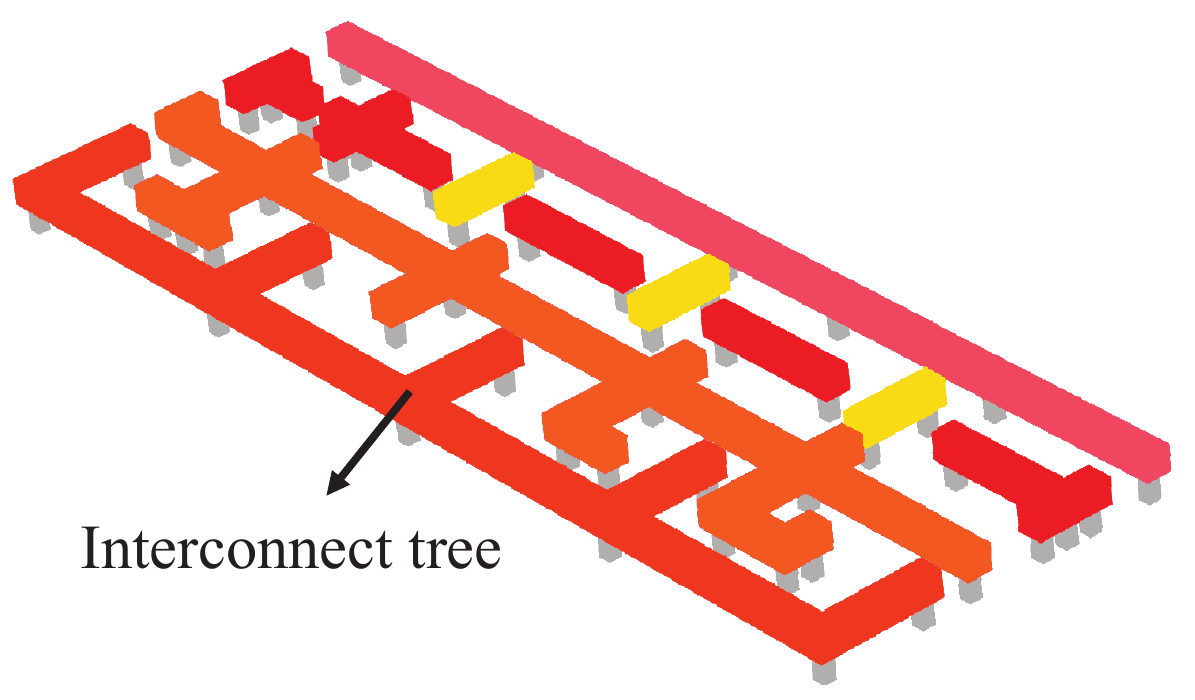}
	\caption{3-D schematic of a metal layer with complex interconnect tree structures in designing IC.}
	\label{fig:ipg}
\end{figure}

On the other hand, machine learning methods have demonstrated their capability to explore the invisible correlation of massive data in recent years~\cite{cr2020:neur,justin2018:jcp,diss1994,lagaris2000:tnn}. 
The algorithm \cite{IE1998:NN} employed a neural network as a nonlinear component of an appropriate trial function to solve PDEs with low demand on memory space. 
Physics-informed neural networks (PINN) encoded laws of physics into a neural network for discovering solutions of general PDEs \cite{PINN2019:Journal}.
Frameworks such as weak adversarial networks (WAN) and multi-fidelity physics-informed neural networks (MPINN) have been proposed for classical problems in fluids, quantum mechanics, reaction-diffusion systems, and the propagation of nonlinear shallow-water waves \cite{MENG2020:jcp, ZANG2020:jcp}. Inspired by recent progress with learning-based methods for solving PDEs, a data-driven meshless 2-D analysis method was proposed to calculate electric potential and electric field in VLSI interconnects \cite{Jin2021:DATE}.
The stress solution of the coupled EM-induced PDEs is non-smooth and depends on the structure of interconnect, shown in Fig.~\ref{fig:ipg}.  
One limitation of employing these competing learning-based schemes in EM analysis is that the prediction accuracy will decrease as the number of segments increases since the methods focus on solving a single PDE and cannot directly provide a global approximation for the interconnects governed by coupled PDEs subject to complex BCs. 
{\color{black}To mitigate this problem, our previous work \cite{hou:tcad} extended PINN to a new space-time physics-informed neural network (STPINN) for analyzing the EM-induced stress evolution by coupling the physics-based EM analysis with dynamic temperature incorporating Joule heating and via effect.} 


In this paper, we propose a new approach to achieve stress evolution solutions under time-varying temperatures during the void nucleation phase. The proposed method employs multilayer perceptrons (MLP) to generate differentiable, closed stress solutions on arbitrary complex multi-segment interconnect structures without a mesh generation. Furthermore, we compute the stress evolution motivated by dynamic temperature and analyze the kinetic difference against the constant temperature.
The proposed method is compared with PINN \cite{PINN2019:Journal}, FEM \cite{Comsol} and EMSpice \cite{sun2020:tdmr} in accuracy and performance. EMSpice is a simulation tool for full-chip EM analysis, which can obtain stress solutions of straight multi-segment interconnect trees during the void nucleation phase.
The proposed method shows high accuracy and computational savings. The main contributions of this paper are:

\begin{itemize}
	\item We propose a fast learning-based stress evolution computation method aiming at complex multi-segment interconnect structures. The method is based on MLP and requires no prior knowledge of stress evolution during the training process. Unlike numerical methods such as FDM and FEM, the proposed method, which is mesh-free, can obtain the EM-induced stress at a certain space and time without solving solutions at all meshing points. 
	
	\item We propose a new method to formulate the objective function to consider the constraints of EM induced-stress evolution consisting of the diffusion process, BCs, and initial conditions. 
	{\color{black}In stress analysis on multi-segment interconnects, compared with the state-of-the-art learning-based PINN method and STPINN, the proposed method is extended to reduce the demand on the number of training data and to achieve higher prediction accuracy at a shorter training time.}
	



	\item {\color{black}The proposed method can obtain the stress distribution of any complex multi-segment interconnect structure whose junction is connected to more than two adjacent segments. Transient stress evolution at any given aging time and location can be inferred by the proposed method. The advantages of the proposed method on accuracy and computational performance are verified by numerical results.
	}


\end{itemize}


The rest of the paper is organized as follows. Section~\ref{gradient-based} reviews the EM physics and the physics-based stress modeling. Section~\ref{optimization} generalizes the constrained problem according to the EM stress modeling and shows how to formulate the objective function. Section~\ref{method} introduces the framework of the proposed method and extends it to the dynamic model for time-varying temperature. Section~\ref{results} shows the results of the proposed method and performance comparison against competing methods. Section~\ref{conclusion} concludes this paper.

\section{EM Physics and Physics-based Stress Modeling}\label{gradient-based}

EM is the mass transport resulting from the momentum exchange between conducting electrons and metal atoms within the high-density current. 
In a dual-damascene structure, the metal atoms are subject to a mechanical driving force and an opposite electronic wind force, leading to a depletion at the cathode and an accumulation at the anode of metal wire. 
{\color{black}In this process, voids and hillocks are generated by the lasting electrical load, which develops a stress gradient along the metal wire.} 
Tensile stress promotes the formation of atom depletion and causes void nucleation when its value exceeds the critical value, defined as $\sigma_{crit}$. The copper atoms are blocked from diffusing towards inter-layer (ILD) and inter-metal dielectrics (IMD) by a barrier layer. Figs.~\ref{fig:upstream}~\&~\ref{fig:downstream} show the EM effect in the single copper metallization for electrons moving upward and downward, also referred to upstream electron flow and downstream electron flow \cite{li2015:tvlsi}. The void nucleation phase can be governed by the kinetics equation and the electrical resistance of interconnects is degenerated due to void growth after the nucleation phase \cite{Korhonen1993,Sukharev2008:TCAD}. 
\begin{figure}[t]
	\centering 
	\subfigure[]{
		\includegraphics[width=0.45\columnwidth]{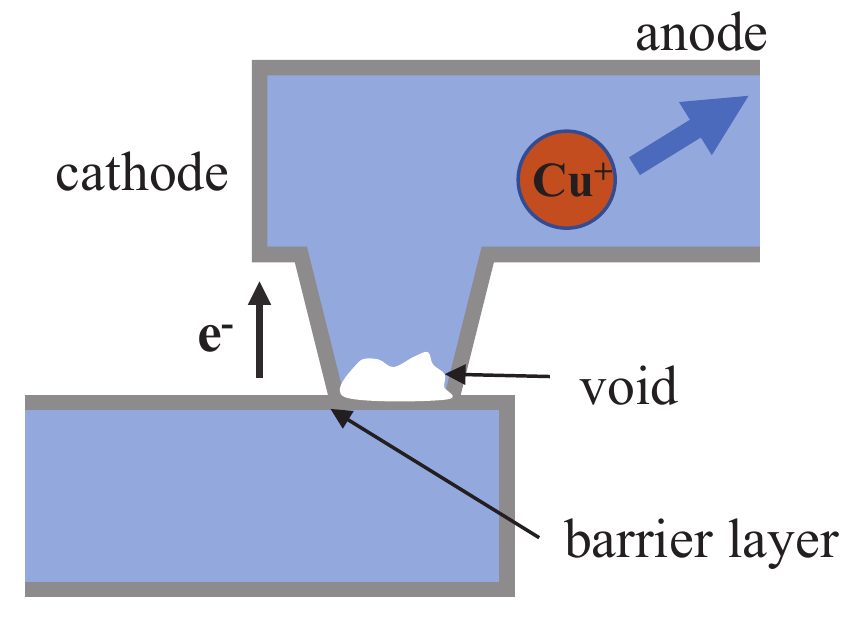}
		\label{fig:upstream}}
	\subfigure[]{
		\includegraphics[width=0.45\columnwidth]{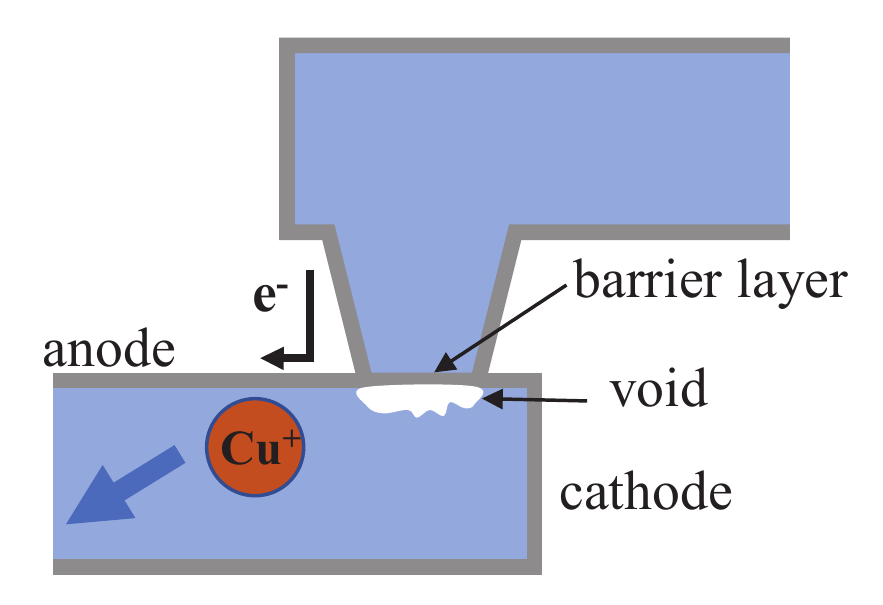}
		\label{fig:downstream}}
	\caption{EM effect under: (a) upstream electron flow and (b) downstream electron flow.}
	\label{fig:multi}
\end{figure}
The interconnect trees which consist of connected metal segments are bounded by the barrier layers at vias in VLSI chips, which results in the blocked region constraint for the metal atoms free-flowing. 

The Korhonen's equation describes the hydrostatic stress evolution $\sigma_i(x,t)$ in the $i$-th segment by diffusion-like equation \cite{Korhonen1993}, which takes the form 
\begin{equation}\label{eq:Korhonen's PDE}
\frac{\partial\sigma_i(x,t)}{\partial t}=\frac{\partial}{\partial x}\Big[\kappa_i\Big(\frac{\partial \sigma_i(x,t)}{\partial x}+G_{i}\Big)\Big],\\
\end{equation}
where $x$, $t$, $\kappa_i=D_aB\Omega/(kT)$ are location, time and stress diffusivity. It is supposed that the stress diffusivity is the same in each segment. The notation $B$ is the effective bulk related to line geometry, especially width, aspect, and grain morphology \cite{Dwyer2010:JAP,Hau:2000MRS}. The Boltzmann constant and the absolute temperature are $k$, $T$. The effective atomic diffusion coefficient, defined as $D_a=D_0\exp(-E_a/(kT))$, is typically determined by the interfacial and grain boundary diffusivities induced by grain microstructure development \cite{Dwyer2008:JAP}, where $D_0$ is the self-diffusion coefficient. Notations $\Omega$, $j$, $\rho$, $Z^*$ and $E_a$ represent the atomic lattice volume, current density, metal resistivity, the effective charge number and activation energy, respectively. The EM driving force is written as $G=|Z^*|e\rho j/\Omega$. 


We suppose there's no pre-existing residual stress along the interconnects and the IC is defined as
\begin{equation}\label{eq:ic}
\sigma_i(x,0)=0.
\end{equation}
The spatial gradient of stress evolution ($\partial \sigma/\partial x$) in the terminals and interior junctions of the interconnect tree are restrained by the BCs.
Specifically, BC at the terminal describes that the atomic flux is blocked at the terminal of the confined metal wire and the atomic flux is defined as
\begin{equation}\label{eq:atomic flux}
J(x,t)=\frac{D_aC_v\Omega}{ kT}\Big(\frac{\partial \sigma_r(x,t)}{\partial x}\Big|_{x=x_r}+G_r\Big),
\end{equation}
where $C_v$ is the number of metal atoms per unit volume and $r$ represents the terminal and interior junctions of the interconnect tree. During the nucleation phase of stress evolution, the atomic flux at terminals is equivalent to zero, then BC at terminals is expressed as
\begin{equation}\label{eq:nbc}
\kappa_b\Big(\frac{\partial \sigma_b(x,t)}{\partial x}\Big|_{x=x_b}+G_b\Big)=0,\\
\end{equation}
where $b$ represents the blocked terminals.

Within the interconnect trees, metal atoms diffuse across adjacent segments through interior junctions of the interconnect. 
In this way, the stress at the interior junction follows the flux conservation as
\begin{align}
&\sum_iw_{i,r}\cdot\kappa_i\Big(\frac{\partial\sigma_{i}(x,t)}{\partial x}\Big|_{x=x_r}+G_i\Big)\cdot n_{i,r}=0,\label{eq:fc}
\end{align}
where $w_{i,r}$ is the branch width of segment $i$ connected to interior junction $r$ and $n_{i,r}$ represents the unit normal direction of the junction $r$ on segment $i$, which is $+1$ for the left, below segments and $-1$ for the right, upper segments. 
The length and width of each segment are not exactly the same due to the design requirements. 
Furthermore, stress continuity condition shows that stress at interior junctions is continuous on the adjacent segments, which can be expressed as
\begin{equation}\label{eq:sc}
\sigma_{i_1}(x_i,t)=\cdots=\sigma_{i_m}(x_i,t),\\
\end{equation}
where segments $i_1,\cdots,i_m$ intersect at $x_i$. To this end, \eqref{eq:nbc}, \eqref{eq:fc} and \eqref{eq:sc} describe the BCs of \eqref{eq:Korhonen's PDE}.


\section{Gradient-based Analysis in Stress Evolution }\label{optimization}

In order to solve the stress evolution equations, we generalize the diffusion constrained problem and the gradient constrained problem in stress modeling.
The diffusion constrained problem focuses on discovering the stress diffusion process within each segment governed by Korhonen's equation \eqref{eq:Korhonen's PDE}, 
while the gradient constrained problem aims at finding the proper spatial gradient of stress at nodes to satisfy BCs.
In this section, we first solve the diffusion constrained problem by constructing a trial function. The physics-based constraints are then transformed into the gradient constraint.
After that, we formulate a gradient-based objective function for neural network training to perform the stress evolution analysis.
\subsection{Diffusion Constrained Problem}
In EM analysis, the diffusion-like Korhonen's equation \eqref{eq:Korhonen's PDE} constructs the diffusion constraint for stress evolution distribution within each segment in the time range $t\in(0,T_{steady}]$, where $T_{steady}$ is the upper limit of the observation time sufficient for EM evaluation or reaching the steady state. To satisfy the constraint, we introduce a trial function $\varPsi_t(x,t,L,\theta^-,\theta^+)$ as the solution of stress modeling. The function takes location $x$, time instance $t$, length of the segment $L$ and adjustable parameters $\theta^+,\theta^-$ as inputs, and is capable of providing solutions respecting the diffusion constraint subject to the initial condition \eqref{eq:ic}, which satisfies
\begin{equation}\label{eq:physics constraint}
\begin{aligned}
\varPsi_t(x_{i,j}&,t,L_i,\theta_{i}^-,\theta_{i}^+)=\mathop{\arg\min}_{\varPsi_t}\\
\Big\{&\sum_{i=0}^M\sum_{j=0}^N\Big|\varPsi_t(x_{i,j},0,L_i,\theta_{i}^-,\theta_{i}^+)\Big|^2\\+&
\sum_{i=0}^M\sum_{j=0}^N\Big|\frac{\partial}{\partial x_{i,j}}\Big[\kappa_i\Big(\frac{\partial \varPsi_t(x_{i,j},t_{i,j},L_i,\theta_{i}^-,\theta_{i}^+)}{\partial x_{i,j}}+G_{i}\Big)\Big]\\&\qquad\qquad-\frac{\partial\varPsi_t(x_{i,j},t_{i,j},L_i,\theta_{i}^-,\theta_{i}^+)}{\partial t_{i,j}}\Big|^2\Big\}.\\
\end{aligned}
\end{equation}
Here, $i$ represents the number of the segment. The notations $x_{i,j}\in(0,L_i),\ t_{i,j}\in(0,T_{steady}]$ are the $j$-th spatial and temporal collocation points of the $i$-th segment obtained by random sampling schemes. {\color{black} This minimization in \eqref{eq:physics constraint} will fulfill the physics constraints in \eqref{eq:Korhonen's PDE} \&~\eqref{eq:ic}.} {\color{black}The details for deriving the trial function are given in Appendix A.}
The trial function follows
\begin{equation}\label{eq:convolution}
	\begin{aligned}
	&\varPsi_t(x,t,L,\theta^-,\theta^+)\approx\\
	&\sum_{n=0}^{2}\Big(\frac{-dk(t,\theta^-)}{dt}\ast \big(g(\xi_1(n,x,L),t)+g(\xi_3(n,x,L),t)\big)\\
	&-k(0,\theta^-)\times\big(g(\xi_1(n,x,L),t)+g(\xi_3(n,x,L),t)\big)
	\\&+\frac{dk(t,\theta^+)}{dt}\ast \big(g(\xi_2(n,x,L),t)+g(\xi_4(n,x,L),t)\big)\\
	&+k(0,\theta^+)\times \big(g(\xi_2(n,x,L),t)+g(\xi_4(n,x,L),t)\big)\Big),\\
	\end{aligned}
\end{equation}
where 
\begin{equation}\label{eq:notation}
\begin{aligned}
&\xi_1(n,x,L) =(2n+2)L-x,\\
&\xi_2(n,x,L) =(2n+1)L-x,\\
&\xi_3(n,x,L) =(2n)L+x,\\
&\xi_4(n,x,L) =(2n+1)L+x,\\
\end{aligned}
\end{equation}
and
\begin{equation} \label{eq:basisFunc}
g(x,t)=2\sqrt{\frac{\kappa t}{\pi}}e^{-\frac{x^2}{4\kappa t}}-x\times\texttt{erfc}\{\frac{x}{2\sqrt{\kappa t}}\}.
\end{equation}
Here, $k(t,\theta)$ is an adjustable time-related function and the temporal convolution follows $a(t) \ast b(t)=\int_0^t a(\tau)b(t-\tau)d\tau$. 
It should be noted that the trial function is subject to the following Neumann BCs when $t\geq 0$
\begin{equation}\label{eq:Boundary}
\begin{aligned}
&\frac{\partial \varPsi_t(x,t,L,\theta^-,\theta^+))}{\partial x}-k(t,\theta^-)=0, x=0,\\
&\frac{\partial \varPsi_t(x,t,L,\theta^-,\theta^+))}{\partial x}-k(t,\theta^+)=0, x=L,\\
\end{aligned}%
\end{equation}%
{\color{black}which are motivated by \eqref{eq:atomic flux}.} It can be observed from \eqref{eq:Boundary} that the adjustable functions $k(t,\theta^{-}),k(t,\theta^+)$ are equivalent to the spatial gradients of the trial function at nodes of each segment. The superscript $-/+$ is employed to distinguish the preceding and subsequent node. We define this time-related spatial gradient at nodes as the \textbf{stress gradient}. 
It demonstrates that for any segment $i$, the function $k(t,\cdot)$ approximates the stress gradient by the adjustable parameter $\theta_i^{-/+}$ corresponding to the preceding/subsequent node. 
Since the trial function \eqref{eq:convolution} satisfies the diffusion constraint, the optimization problem of the stress modeling has been reduced from the original diffusion-gradient constrained problem to a gradient constrained problem with respect to the adjustable parameters $\theta$. In the next section, we present a systematic method of computing $\theta$ to deal with the constrained problem in stress modeling.

\subsection{Gradient Constrained Problem}
\begin{figure}[t]
	\centering
	\includegraphics[width=0.7\columnwidth]{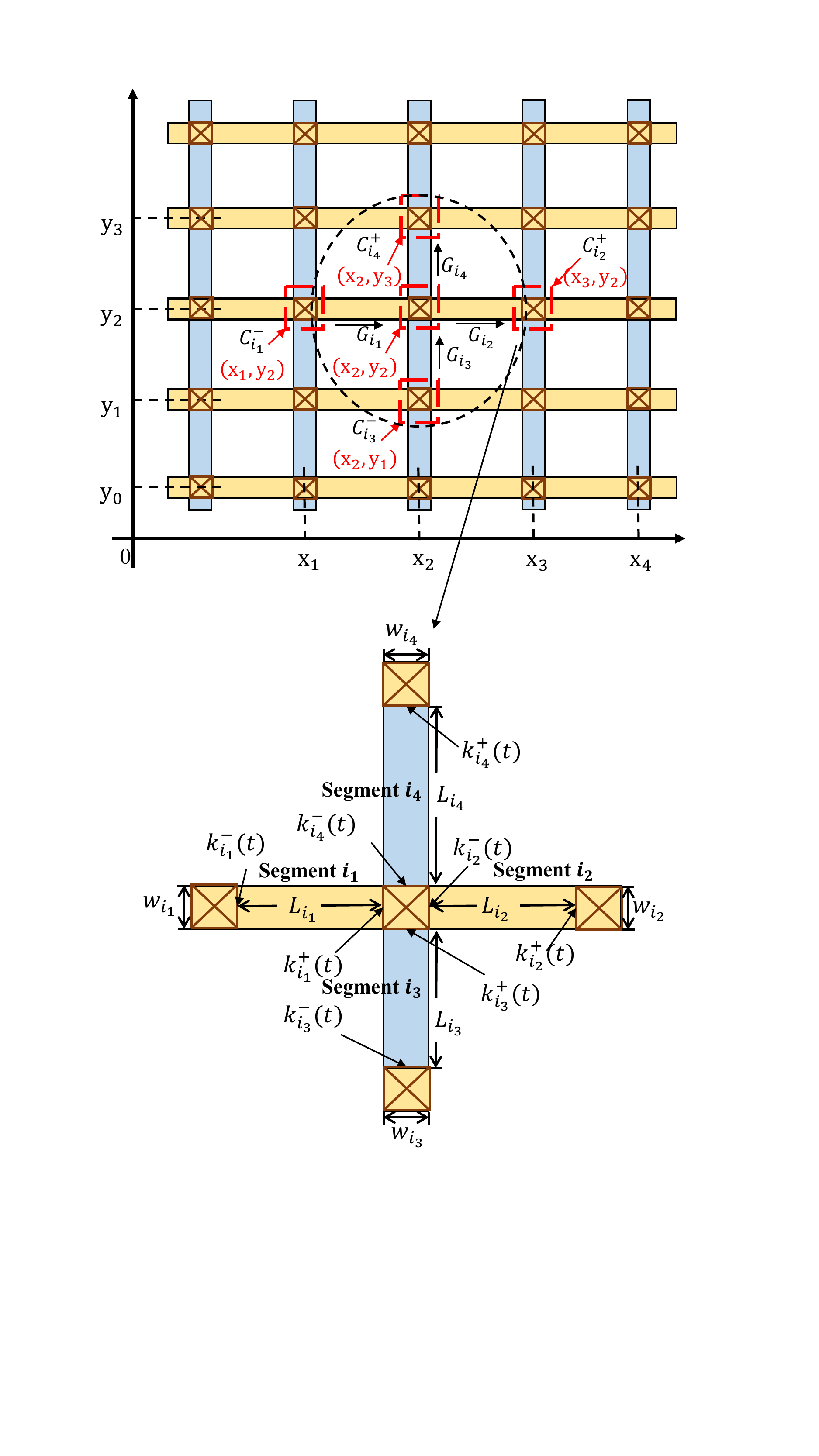}
	\caption{Notation definition for a cross-shaped interconnect tree.}
	\label{fig:ccs}
\end{figure}

To construct an accurate trial function for approximating stress evolution, we customize the expression of stress gradient evolution subject to the gradient constraint. The gradient constrained problem aims at adjusting the stress gradient to satisfy BCs.
We first illustrate the notation definition in Fig.~\ref{fig:ccs}. The Cartesian coordinate is employed to represent different positions in the interconnects with complex structures. The positive direction is used to distinguish the preceding and subsequent nodes of each segment. For a general interconnect tree, we use $L_i,\ w_i,\ G_i$ to describe the length, width, and EM driving force of the $i$-th segment. We denote $k(t,\theta_i^{-/+})=k_{i}^{-/+}(t)$ as the stress gradient at the preceding/subsequent node of the $i$-th segment, where $C_{i}^{-/+}$ represents the coordinate of the corresponding node.
Fig.~\ref{fig:ccs} shows an instance of cross-shaped interconnect with segments $i_1,i_2,i_3,i_4$. 
We define the center node of the interconnect as $C_i=C_{i_1}^+=C_{i_2}^-=C_{i_3}^+=C_{i_4}^-$ and the stress gradients at $C_i$ as
\begin{equation}\label{eq:km}
k_{i,m}(t)=\left\{
    \begin{aligned}
    &k_{i_1}^+(t),m=1,\\
    &k_{i_2}^-(t),m=2,\\
    &k_{i_3}^+(t),m=3,\\
    &k_{i_4}^-(t),m=4.\\
    \end{aligned}
    \right.
\end{equation}
We denote the collections of EM driving force and width in the adjacent segments connecting with $C_i$ by $\boldsymbol{G_i}=[G_{i_1},G_{i_2},G_{i_3},G_{i_4}]$ and $\boldsymbol{w_i}=[w_{i_1},w_{i_2},w_{i_3},w_{i_4}]$.
\begin{theory}\label{theory1} 
For arbitrary node $C_{i}$ connecting with segments $i_1,i_2,i_3,i_4$, 
the initial stress gradient at $C_i$ follows
\begin{equation}\label{eq:initial gradient}
\begin{aligned}
k_{i,m}(0)&=J(\boldsymbol{w_i},\boldsymbol{G_i})\\
&=\left\{
    \begin{aligned}
    &-G_b,\qquad\qquad\qquad\qquad\text{at terminal,}\\
    &\frac{w_{i_2}G_{i_2}+w_{i_4}G_{i_4}-w_{i_1}G_{i_1}-w_{i_3}G_{i_3}}{w_{i_1}+w_{i_2}+w_{i_3}+w_{i_4}},\\&\qquad\qquad\qquad\qquad\qquad\qquad m=1,3,\\
    &\frac{w_{i_1}G_{i_1}+w_{i_3}G_{i_3}-w_{i_2}G_{i_2}-w_{i_4}G_{i_4}}{w_{i_1}+w_{i_2}+w_{i_3}+w_{i_4}},\\&\qquad\qquad\qquad\qquad\qquad\qquad m=2,4.\\
    \end{aligned}
    \right.
\end{aligned}
\end{equation}
Here, $G_b$ is the EM driving force in the terminal segment.
\end{theory}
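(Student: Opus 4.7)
The plan is to split the proof into the blocked-terminal case and the interior-junction case, and to handle the latter by combining a small-$t$ semi-infinite expansion of the stress near the junction with the constraints \eqref{eq:fc} and \eqref{eq:sc}.

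For a blocked terminal the argument is immediate: the Neumann condition \eqref{eq:nbc} holds for every $t\ge 0$ and, by the definition in \eqref{eq:Boundary}, $k(t,\theta)$ is exactly the spatial gradient at the node, so $k_{i,m}(0)=-G_b$, matching the terminal branch of the theorem.

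For an interior junction $C_i$ connecting segments $i_1,i_2,i_3,i_4$, the key observation is that as $t\to 0^+$ the diffusion length $\sqrt{\kappa t}$ is much shorter than any $L_{i_m}$, so each incident segment can be replaced by a semi-infinite medium whose near-junction boundary carries the constant gradient $K_m$ (equal to $\pm k_{i,m}(0)$, with the sign flipped for $m=1,3$ because the junction is their subsequent node and hence the outward-pointing coordinate runs opposite to the local $x$-axis). The canonical solution of the 1D diffusion equation on $y\ge 0$ with constant Neumann data $K$ at $y=0$ and zero initial condition is precisely $-K\,g(y,t)$ with $g$ from \eqref{eq:basisFunc}; evaluating at the boundary yields $\sigma_c(t)=-2K\sqrt{\kappa t/\pi}+O(t^{3/2})$. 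Applying stress continuity \eqref{eq:sc} forces the four leading $\sqrt{t}$-coefficients to agree, which after restoring the sign conventions gives $k_{i,1}(0)=k_{i,3}(0)=-k_{i,2}(0)=-k_{i,4}(0)$. With the four unknowns reduced to a single $A:=k_{i,1}(0)$, I would substitute $k_{i,1}=k_{i,3}=A$, $k_{i,2}=k_{i,4}=-A$ and $n_{i_1,r}=n_{i_3,r}=+1$, $n_{i_2,r}=n_{i_4,r}=-1$ into \eqref{eq:fc} and solve the resulting scalar linear equation to obtain $A=(w_{i_2}G_{i_2}+w_{i_4}G_{i_4}-w_{i_1}G_{i_1}-w_{i_3}G_{i_3})/(w_{i_1}+w_{i_2}+w_{i_3}+w_{i_4})$, which is exactly the stated formula for $m=1,3$; then $-A$ gives the expression for $m=2,4$.

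The main obstacle will be justifying the semi-infinite expansion rigorously, i.e., showing that the image-method tails in the full trial function \eqref{eq:convolution} (indexed by $\xi_1,\ldots,\xi_4$ with $n\ge 1$, together with contributions propagating back from the far boundary) contribute only $O(t^{3/2})$ at the junction, so that the continuity-based matching of $\sqrt{t}$-coefficients is uncorrupted. A compact way to carry this out is a Laplace-domain estimate: expanding $\hat k(s)=k(0)/s+O(s^{-2})$ as $s\to\infty$ and using the fact that the influence of the far-end boundary on the junction-side gradient is damped by a factor like $e^{-\sqrt{s/\kappa}\,L_{i_m}}$ for large $s$, one sees that those corrections are super-polynomially small and therefore do not affect the leading $\sqrt{t}$ asymptotics on which the symmetry conclusion rests.
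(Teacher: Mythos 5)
Your proof is correct, but it takes a genuinely different route from the paper's. The paper's Appendix B is a direct computation: it Laplace-transforms the coupled four-segment system, writes $\mathcal{L}(\sigma_{i_m})=A_me^{\sqrt{s/\kappa}x}+B_me^{-\sqrt{s/\kappa}x}$, solves the resulting $8\times 8$ linear system \eqref{eq:linear4} for the coefficients in \eqref{eq:ab}, forms the junction gradients $aA_m-aB_m$ in \eqref{eq:K}, and then applies the initial value theorem $\lim_{s\to\infty}sF(s)$, under which the $e^{-aL_{i_m}}$ terms die and only the $c_m=G_{i_m}/s$ terms survive, yielding \eqref{eq:initial gradient}. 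You instead bypass the explicit solve: the small-$t$ semi-infinite expansion $\sigma_c(t)=-2K\sqrt{\kappa t/\pi}+O(t^{3/2})$ together with stress continuity \eqref{eq:sc} forces the orientation-signed initial gradients to coincide, and flux conservation \eqref{eq:fc} then determines the single remaining scalar; your signs and the resulting value of $A$ check out against the theorem. What your argument buys is brevity, an explanation of \emph{why} the answer has the antisymmetric $\pm$ structure, and immediate generalization to a junction of $M$ adjacent segments (continuity still collapses everything to one unknown, and \eqref{eq:fc} is still one scalar equation), whereas the paper's $8\times 8$ system is tied to $M=4$. What the paper's route buys is that it is fully explicit — the Laplace-domain gradients \eqref{eq:K} are a byproduct used to justify the trial-function machinery — and it absorbs the one point you must still argue, namely that the far-boundary and time-varying-Neumann corrections are subleading; your proposed $e^{-\sqrt{s/\kappa}L_{i_m}}$ large-$s$ estimate is essentially the same step the paper performs when it discards those terms under the initial value theorem, so closing that gap rigorously would make your proof land in the same place by a shorter path.
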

\begin{proof}\renewcommand{\qedsymbol}{}
	See Appendix B.\end{proof}

It demonstrates in Theorem \ref{theory1} that the initial stress gradient in \eqref{eq:convolution} can be derived by the adjacent EM driving forces and widths.
We suppose that there are $M$ segments connecting with the node $C_i$ and define the index of the adjacent segments as $\Gamma_i$. We employ a neural network with adjustable parameter $\alpha$ to obtain time derivatives of the first $M-1$ stress gradients
\begin{equation}\label{eq:mlp}
[\frac{dk_{i,\Gamma_i(1)}(t)}{dt},\cdots,\frac{dk_{i,\Gamma_i(M-1)}(t)}{dt}]
=F(t,C_i,\boldsymbol{G_i},\alpha).
\end{equation}
The neural network takes the time instance, the node coordinate $C_i$ as well as the adjacent EM driving forces $\boldsymbol{G}_i$ as inputs. In this way, we can employ the adjustable parameter $\alpha$ to obtain stress gradients on the whole interconnect instead of employing different parameters $\theta_{i}^{-/+}$ for stress gradient at each node of each segment.
We also define a transformation $H(\cdot)$ to obtain the stress gradient satisfying conditions \eqref{eq:nbc} \& \eqref{eq:fc}, which takes the form
\begin{equation}\small\label{eq:gt}
\begin{aligned}
\frac{dk_{i,m}(t)}{dt}&=H(\frac{dk_{i,\Gamma_i(1)}(t)}{dt},\cdots,\frac{dk_{i,\Gamma_i(M-1)}(t)}{dt})\\&=\left\{
    \begin{aligned}
    &0,\qquad\qquad\qquad\qquad\qquad\qquad\ \ \qquad\ \  \text{at terminal,}\\
    &\frac{dk_{i,m}(t)}{dt},\qquad\qquad\ \ \ \ \  m=\Gamma\!_i(1),\!\cdots\!,\Gamma\!_i(M-1),\\
    &\frac{1}{w_{i_m}}\sum_{\!j\!=\!\Gamma\!_i\!(\!1\!)}^{\Gamma\!_i\!(\!M\!-\!1\!)}\Big((-1)^{j\!+\!m\!+\!1}w_{i_j}\frac{dk_{i,j}(t)}{dt}\Big), m=\Gamma\!_i(M).\\
    \end{aligned}
    \right.
\end{aligned}
\end{equation}
The inputs of \eqref{eq:gt} are the outputs of \eqref{eq:mlp}. The transformation $H(\cdot)$ and the initial stress gradient calculation $J(\cdot)$ in Theorem \ref{theory1} satisfy the constraints corresponding to BC at terminals and the flux conservation.
Therefore, the trial function \eqref{eq:convolution} describing stress evolution prediction of the $i$-th segment can be rewritten as
\begin{equation}
\varPsi_t(x,t,\theta_{i}^-,\theta_{i}^+,L_i)=\varPsi_t(x,t,\mathbb{G}_i,\mathbb{W}_i,\mathbb{C}_i,L_i,\alpha),
\end{equation}
where $\mathbb{G}_i=[\boldsymbol{G_{i}^-},\boldsymbol{G_{i}^+}],\mathbb{W}_i=[\boldsymbol{w_{i}^-},\boldsymbol{w_{i}^+}],\mathbb{C}_i=[C_{i}^-,C_{i}^+]$ are the collections of adjacent EM driving forces, widths and coordinates of the preceding and subsequent nodes in the $i$-th segment.
The trial function will approximate the solution of stress modeling once it satisfies \eqref{eq:sc}. We can penalize the deviations of \eqref{eq:sc} by minimizing the loss
\begin{equation}\label{eq:loss}
\begin{aligned}
E[\alpha]=&\sum_{i=1}^{N_n}\sum_{j=1}^{M_i-1}\sum_{k=1}^{N_c}\Big| \varPsi_t(x_{i_j},t_{i,k},\mathbb{G}_{i_j},\mathbb{W}_{i_j},\mathbb{C}_{i_j},L_{i_j},\alpha)\\&-\varPsi_t(x_{i_{j+1}},t_{i,k},\mathbb{G}_{i_{j+1}},\mathbb{W}_{i_{j+1}},\mathbb{C}_{i_{j+1}},L_{i_{j+1}},\alpha) \Big|^2.
\end{aligned}
\end{equation}
Here, the $i_j$-th segment and the $i_{j+1}$-th segment intersect at the $i$-th interior junction node and we suppose the $i$-th node has $M_i$ adjacent segments. The location of the $i$-th node on the $i_j$-th segment is denoted by $x_{i_j}$, which follows $x_{i_j}=0$ at the preceding node and $x_{i_j}=L_{i_j}$ at the subsequent node. The notation $N_n$ represents the number of interior junction nodes in the interconnect tree.

The objective function \eqref{eq:loss} focuses on keeping stress continuous at interior junctions in arbitrary $N_c$ time instances. The competing learning-based methods (such as PINN) constrain neural networks using the diffusion equation and the corresponding BCs, initial conditions by the loss function, which will require a large number of training data for the loss function when the number of coupled diffusion equations increases. Thus, PINN cannot provide satisfactory accuracy when it is expanded to analyze stress evolution on large interconnect trees with multi segments. To solve stress modeling equations of large interconnects through neural networks, we propose a new objective function \eqref{eq:loss} enforcing the physics-based constraints at segment nodes, which is in low demand for training data compared with the training schemes requiring sampling collocation points in the whole interconnect. This low demand for training data stems from a preconstructed trial function as the solution of the stress modeling equation satisfying the diffusion constraint, BC at terminals, and flux conservation.
No additional derivative calculation is required during the training process in the proposed method compared with PINN. 
This results in significant computational savings in the training process.

The parameters in the trial function are updated by minimizing \eqref{eq:loss}. After convergence, we can obtain the stress evolution of the interconnect tree with any structure by the trial function.

\begin{figure}[tb]
	\centering
	\includegraphics[width=1.0\columnwidth]{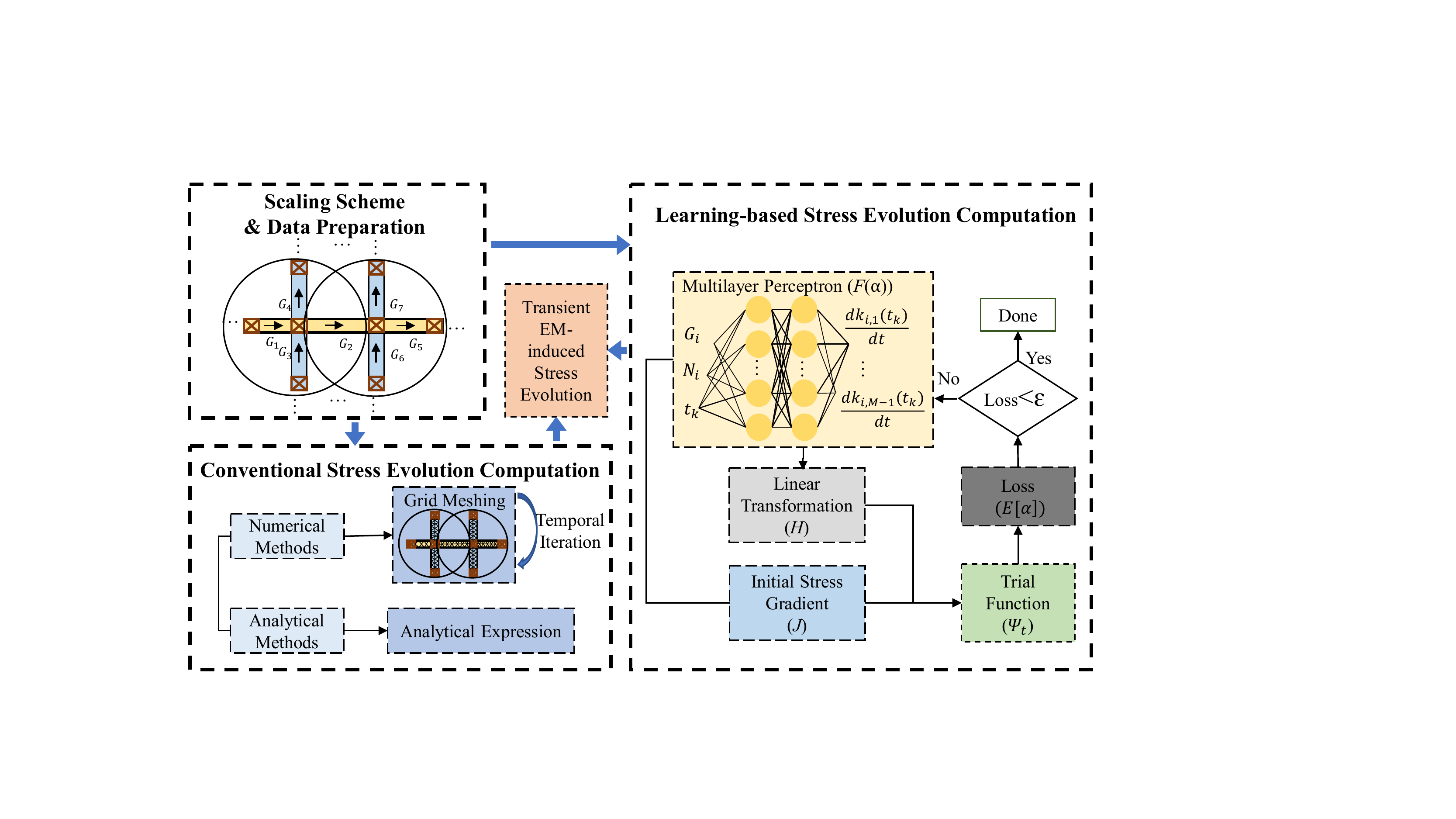}
	\caption{{\color{black}The flowcharts of the conventional methods and the proposed method in EM-induced stress evolution analysis.}}
	\label{fig:flowchart}
\end{figure}


\section{Multilayer Perceptron Method}\label{method}

In this section, we propose a multilayer perceptron based method to obtain the stress evolution in complex interconnect trees. {\color{black}Fig.~\ref{fig:flowchart} shows the flowcharts of the conventional EM stress computation and the proposed learning-based stress evolution computation. The conventional methods consist of numerical methods that employ grid meshing and temporal iterations to obtain the mesh-dependent stress evolution, and analytical methods which derive closed-forms for stress development on specific interconnect geometry. In the proposed scheme, we replace the linear span of a finite set of local basis functions in numerical methods such as FEM with nonlinear and linear operations in MLP to obtain mesh-free stress evolution with few temporal iterations.}
As shown in Fig.~\ref{fig:flowchart}, in the proposed method, we generate the training data and the test data according to interconnect tree structure and EM driving force after performing the scaling scheme and data preparation. The trial function which is related to the neural network $F$, the linear transformation $H$, and the initial stress gradient calculation $J$, provides the stress evolution prediction through the input data. The prediction accuracy will increase as the objective function converges to a global minimization.
More details in the flowchart will be discussed.

\subsection{Scaling Scheme and Data Preparation}
One potential limitation in stress modeling is that the related coefficients differ by a few orders of magnitude. 
{\color{black}For neural network training, we propose a scaling scheme to normalize the coefficients by rewriting Korhonen's equation \eqref{eq:Korhonen's PDE}}
\begin{equation}
\frac{\partial \sigma_{i}(x,t)}{\partial t}=\frac{\partial}{\partial x}\Big[\frac{\omega_x^2}{\omega_t}\kappa_{i,o}\Big(\frac{\partial \sigma_i(x,t)}{\partial x}+\frac{\omega_{\sigma}}{\omega_x}G_{i,o}\Big)\Big],\\
\end{equation}%
{\color{black}where $t=\omega_tt_o,x=\omega_xx_o,\ \sigma_i(x,t)=\omega_{\sigma}\sigma_{i,o}(x_o,t_o)$ are the scaled time, location and stress evolution, respectively. We redefine $G_i=\omega_{\sigma}/\omega_xG_{i,o},\ \kappa_i=\omega_x^2/\omega_t\kappa_{i,o}$ as the scaled EM driving force and stress diffusivity corresponding to the $i$-th segment, and $x_{o},t_{o},\sigma_{i,o}(x_o,t_o),G_{i,o},\kappa_{i,o}$ as the coefficients with original magnitude in the stress modeling.} With the constant scaling factors of location $\omega_x$, time $\omega_t$ and stress evolution $\omega_{\sigma}$, the scaling scheme takes the form
\begin{equation}\label{preprocess}
\begin{aligned}
\sigma_i(x,t,\kappa_i,G_i)&=\sigma_i(\omega_x
x_o,\omega_tt_o,\frac{\omega_x^2}{\omega_t}\kappa_{i,o},\frac{\omega_{\sigma}}{\omega_x}G_{i,o})\\&=\omega_{\sigma}\sigma_{i,o}(x_o,t_o,\kappa_{i,o},G_{i,o}).
\end{aligned}%
\end{equation}%
In this way, we regular the coefficients for the following neural network training.  
The scaling scheme is also effective for BCs and initial conditions. 
{\color{black}This results in taking $t,\ x,\ G_i,\ \kappa_i$ as the inputs for data preparation. It should be noted that the stress prediction $\sigma_i$ obtained by the scaled inputs should be restored to the original magnitude by the scaling factor $\omega_{\sigma}$.}

Then we generate the dataset in the data preparation. {\color{black}A collection of parameters is required to describe the geometry and EM driving force of each segment. For the $i_j$-th segment connecting with nodes $C_{i_j}^+,\ C_{i_j}^-$, the collection can be generalized as $\{\mathbb{G}_{i_j},\mathbb{W}_{i_j},\mathbb{C}_{i_j},L_{i_j}\}$, where $L_{i_j,\ }\mathbb{C}_{i_j}=[C_{i_j}^-,C_{i_j}^+]$ represent the length and the node coordinates of the segment, and $\mathbb{G}_{i_j},\mathbb{W}_{i_j}$ represent the EM driving forces and the widths of the adjacent segments connecting with the nodes. The breadth-first traversal method is employed for generating the above collection.

During the training phase, the required training dataset are the inputs of the loss function \eqref{eq:loss}, enforcing the stress continuity condition, which describes the relationship between the stress developments at the same node on the intersecting segments at arbitrary time instances.
The location of each node with respect to the intersecting $i_j$-th segment is denoted by $x_{i_j}$, which is equivalent to zero for the preceding node and $L_{i_j}$ for the subsequent node.
The collection of $N_c$ time instances $t_{i,k}(k=1,\cdots,N_c)$ is randomly sampled in the observation temporal range $(0,T_{steady}]$.
To this end, for the adjacent segments $i_j$ and $i_{j+1}$, we generalize the training data by
\begin{equation}\nonumber\small
\begin{aligned}
&\{x_{i_j},t_{i,k},\mathbb{G}_{i_j},\mathbb{W}_{i_j},\mathbb{C}_{i_j},L_{i_j};\\&
x_{i_{j+1}},t_{i,k},\mathbb{G}_{i_{j+1}},\mathbb{W}_{i_{j+1}},\mathbb{C}_{i_{j+1}},L_{i_{j+1}}\}.
\end{aligned}%
\end{equation}}%

{\color{black}During the inference phase, to obtain stress evolution in the $i_j$-th segment at the observation location and time $t$, the location should be transformed to the location with respect to the $i_j$-th segment satisfying $x\in [0,L_{i_j}]$.} The input of inference procedure follows $\{x,t,\mathbb{G}_{i_j},\mathbb{W}_{i_j},\mathbb{C}_{i_j},L_{i_j}\}$, which is the input collection for the trial function. {\color{black}It should be noted that although both the training and inference datasets are extracted from the same case with specific interconnect geometry and EM driving forces, the training dataset only includes collections of interior junction nodes, while the inference data can be collections describing arbitrary positions within the interconnect wire.}



\subsection{Learning-based Stress Evolution Computation}
The neural network, as the extension of mathematical models, has been developed for approaching solutions of PDEs recently. 
Moreover, the multilayer perception is proven to be a universal function approximator that applies linear and nonlinear transformations to inputs \cite{HORNIK1989359}. 
Since MLP is straightforward to understand and manipulate, in this work, we employ MLP for the nonlinear approximation $H$ in the trial function to solve the coupled EM-induced PDEs. 

In order to obtain solutions of the trial function \eqref{eq:convolution}, we employ the Gauss-Legendre quadrature algorithm to perform a fast convolution operation. We rewrite the trial function of the $i$-th segment
\begin{equation}\label{eq:final}\small
\begin{aligned}
	&\varPsi_t(x,t,\mathbb{G}_i,\mathbb{W}_i,\mathbb{C}_i,L_i,\alpha)\\
	&=\sum_{n=0}^{2}\sum_{j=0}^{N_g}\frac{A_jt}{2}\Big[-d_i^-(t_j^{+})\times\Big(g\big(\xi_1(n,x),t_j^{-}\big)+g\big(\xi_3(n,x),t_j^{-}\big)\Big)\\
	&+d_i^+(t_j^{+})\times\Big(g\big(\xi_2(n,x),t_j^{-}\big)+g\big(\xi_4(n,x),t_j^{-}\big)\Big)	\Big]\\
	&+\sum_{n=0}^{2}\Big[-k_i^-(0)\times\Big(g(\xi_1(n,x),t)+g(\xi_3(n,x),t)\Big)\\
	&+k_i^+(0)\times \Big(g(\xi_2(n,x),t)+g(\xi_4(n,x),t)\Big)\Big],\\
\end{aligned}
\end{equation}
where
\begin{equation}\small
\begin{aligned}
&t_j^{+}=\frac{t}{2}+\frac{t}{2}t_j,\ t_j^{-}=\frac{t}{2}-\frac{t}{2}t_j,\\
&k_i^+(0)=J(\boldsymbol{w}_i^+,\boldsymbol{G}_i^+),k_i^-(0)=J(\boldsymbol{w}_i^-,\boldsymbol{G}_i^-)\\
&d_i^+(t)=H(F(t,C_i^+,\boldsymbol{G}_i^+,\alpha)),d_i^-(t)=H(F(t,C_i^-,\boldsymbol{G}_i^-,\alpha))
\end{aligned}
\end{equation}
Here, $A_j,\ t_j$ are the Gaussian weights and zero points of Legendre polynomial. We use the convolution kernel for time discretizing, then substitute the discrete time into Gauss-Legendre integration in \eqref{eq:final}. Section~\ref{sec:perf} will analyze the impact of the number of discrete integration series ($N_g$) on accuracy. The objective function \eqref{eq:loss} is calculated according to the trial function \eqref{eq:final} employing the Gauss-Legendre quadrature algorithm and constrains the numerical relationship between solutions of the trial function with different specific inputs. This leads to MLP training based on known input data rather than prior knowledge of stress evolution. 
As the loss converges to a global minimum, the trial function can provide accurate stress approximations.

\subsection{Dynamic EM model under time-varying temperature}
\begin{figure}[tb]
	\centering
	\includegraphics[width=0.7\columnwidth]{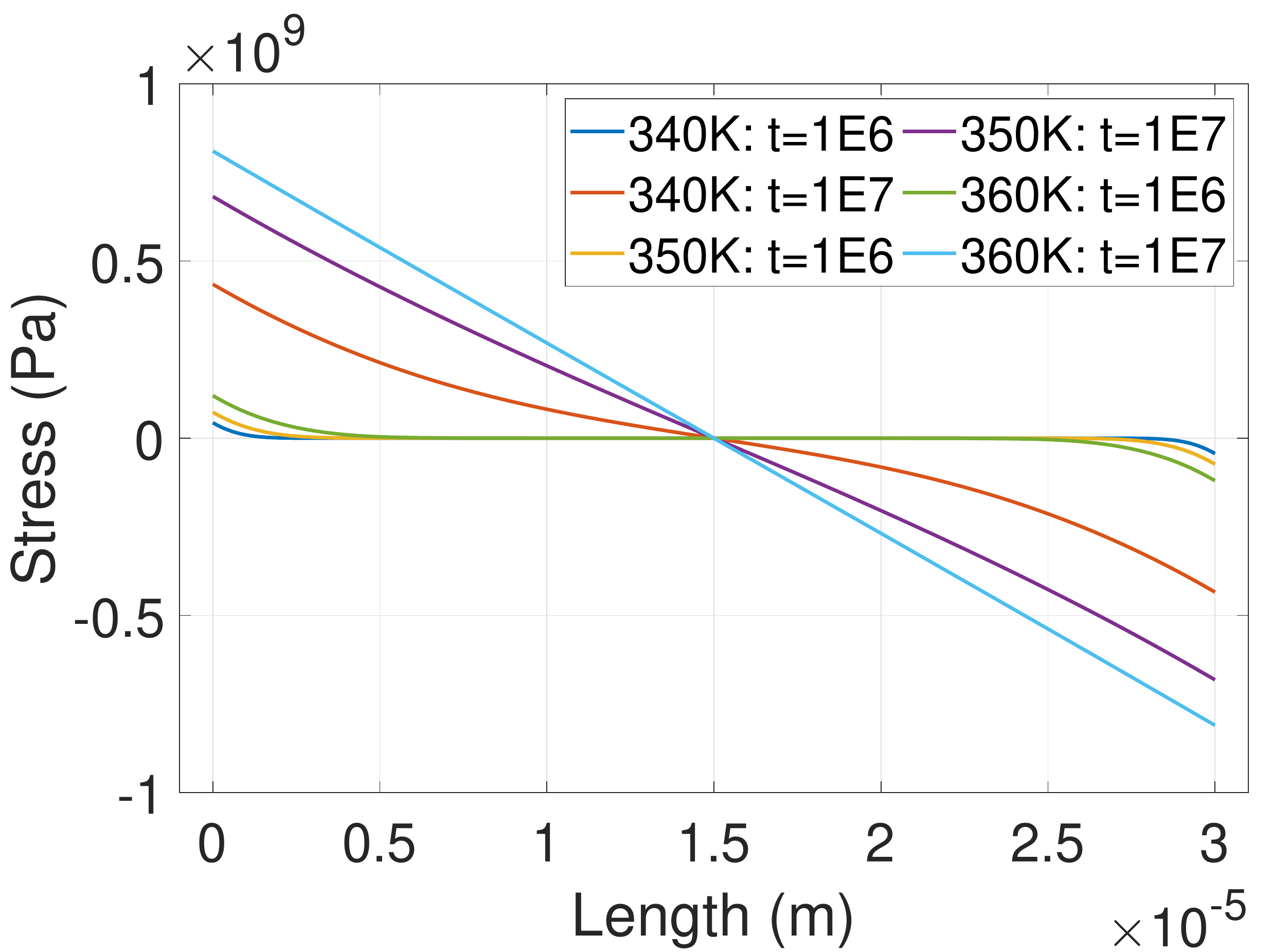}
	\caption{Stress evolution under varying temperature.}
	\label{fig:temp}
\end{figure}
It is shown in Korhonen's equation that the varying diffusivity caused by dynamic temperature will efficiently accelerate or decelerate the stress build-up \cite{Cook2018:TVLS}, shown in Fig.~\ref{fig:temp}. We assume that the temperature of interconnects is time-dependent and employ the parameter $\kappa(t)$ describing the stress diffusivity under time-varying temperature. Then we rewrite \eqref{eq:Korhonen's PDE}
\begin{equation}\label{eq:dynamic}
\frac{\partial\sigma}{\partial T'}=\frac{\partial}{\partial x}\Big[\kappa_0\Big(\frac{\partial \sigma}{\partial x}+G\Big)\Big],
\end{equation}
where $T'=\int_0^t(\kappa(t')/\kappa_0) dt'$ and $\kappa_0$ is a constant. It demonstrates that the analysis for stress evolution under time-varying temperature can be considered as solving EM-induced stress equation under constant temperature after a nonlinear transformation from the temporal variable $t$ to $T'$. We employ an MLP with one hidden layer to predict this nonlinear transformation and use the trial function with adjustable weights \eqref{eq:final} to predict the stress evolution under the constant temperature $\kappa_o$. The experimental results in Section~\ref{sec:ana} show the effectiveness of the dynamic EM model in obtaining time-varying temperature related stress evolution.
\section{results and discussions}\label{results}
In this section, we present stress evolution results obtained from the proposed method and validate its accuracy and performance under constant and time-varying temperatures. In the experiments, the constant temperature is set to be $350\ K$ and the time-varying temperature is set to be $[350+30\sin(4\times 10^{-8}\pi t)]\ K$. First, we analyze the impact of various widths of interconnect trees on stress evolution and expand the model to a dynamic model for EM analysis under time-varying temperature. Then, we discuss how the number of discrete integration and the size of training data affects prediction accuracy. We also test cases with different numbers of layers and neurons to customize the MLP structure. We verify the training acceleration of the proposed model compared with learning-based PINN. Finally, we prove the scalability of the proposed method for EM analysis on straight multi-segment interconnects and complex multi-segment interconnects. We compare the proposed method against the FEM \cite{Comsol}, PINN \cite{PINN2019:Journal}, EMSpice \cite{sun2020:tdmr} for accuracy, training time and computational time, respectively. The proposed method and PINN are implemented in Python 3.6.2 with Tensorflow 1.12.0 and the EMSpice is implemented in Python 3.6.2. The experiments are carried out on a Linux server with 2.20-GHz Xeon processors and NVIDIA GTX1080Ti. The FEM simulations are performed by COMSOL Multiphysics software \cite{Comsol} in 2-D structures.

In the training process, we employ the second-order based optimizer L-BFGS \cite{LBFGS1995} to adjust weights in MLP and employ $\tanh$ as the activation function. Scaling factors of length, time and stress are configured as $\omega_{x}=1\times 10^{-5},\ \omega_t=1\times 10^{-7},\ \omega_{\sigma}=1\times 10^{-7}$. The initial learning rate is $0.001$ with Xavier's initialization method. 
For the comparisons of the proposed method, FEM and EMSpice, the values of parameters used to calculate the stress evolution are set as $k=1.38\times 10^{-23}J/K,\ e=1.6\times10^{-19}C,\ Z^*=10,\ E_a=1.1eV,\ B=1\times10^{11}Pa,\ D_0=5.2\times10^{-5}m^2/s,\ \rho=2.2\times10^{-8}\Omega\cdot m,\ \Omega=8.78\times10^{-30}m^3,\ \sigma_{crit}=4\times10^{8}Pa$.
\subsection{Accuracy Analysis}\label{sec:ana}

\begin{figure}[t]
	\centering 
	\subfigure[]{
		\includegraphics[width=0.55\columnwidth]{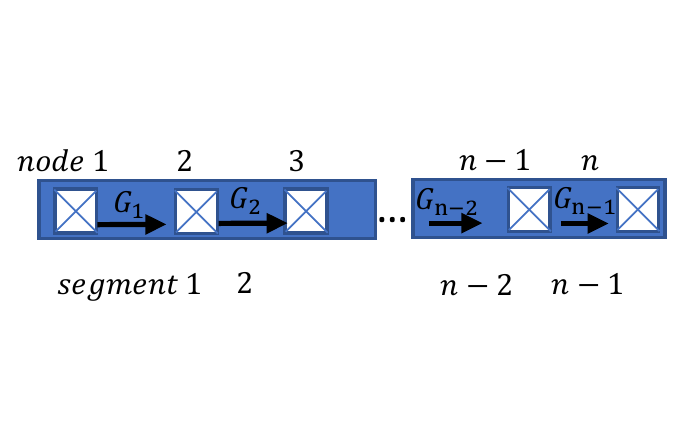}
	\label{fig:seg1d}}
		\subfigure[]{
		\includegraphics[width=0.35\columnwidth]{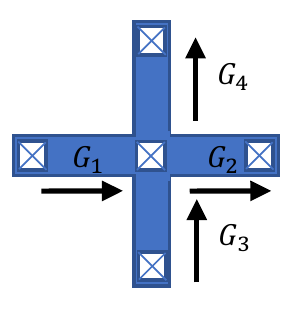}
	\label{fig:cro1d}}
	\caption{Interconnect tree structures: (a) the multi-segment straight wire; (b) the cross-shaped five-terminal wire.}
	\label{fig:multi}
\end{figure}

For validation, we obtain the stress evolution on the multi-segment straight wires and cross-shaped five-terminal wires, shown in Fig.~\ref{fig:multi}. We configure a four-segment interconnect tree of $L_1=10\ \mu m,\ L_2=20\ \mu m,\ L_3=10\ \mu m,\ L_4=10\ \mu m$ within current densities $j_1=4\times 10^9\ A/m^2,\ j_2=-1\times 10^9\ A/m^2,\ j_3=-4\times 10^9\ A/m^2,\ j_4=-1\times 10^9\ A/m^2$. We employ a 5-layer MLP with 50 neurons per layer to construct the trial function. The number of discrete series satisfies $N_g=8$ and the size of training data is set as $N_c=30$. Figs.~\ref{fig:1d1_1111} \&~\ref{fig:1d1_1221} show the stress evolution on segments with different widths and demonstrate that the results obtained from the proposed method fit well with the solutions obtained from FEM, with relative errors of $0.07\%,\ 0.06\%$.

In the simulation of a cross-shaped five-terminal wire, we configure trees of $L_1=20\ \mu m,\ L_2=10\ \mu m,\ L_3=20\ \mu m,\ L_4=30\ \mu m$ and the current densities are set to $j_1=4\times 10^9\ A/m^2,\ j_2=2\times 10^9\ A/m^2,\ j_3=1\times 10^9\ A/m^2,\ j_4=7\times 10^9\ A/m^2$. The comparisons of stress evolution are shown in Figs.~\ref{fig:2d1_1111} \&~\ref{fig:2d1_1221}. It demonstrates that compared with FEM, the proposed method achieves stress with $0.45\%,\ 0.91\%$ relative errors for the two cases.
From Fig.~\ref{fig:2d1}, we can observe the difference in stress evolution on interconnects with different segment widths. It illustrates the significance of considering the width of interconnects in the EM reliability problem.
\begin{figure}[t]
	\centering 
	\subfigure[]{
		\includegraphics[width=0.45\columnwidth]{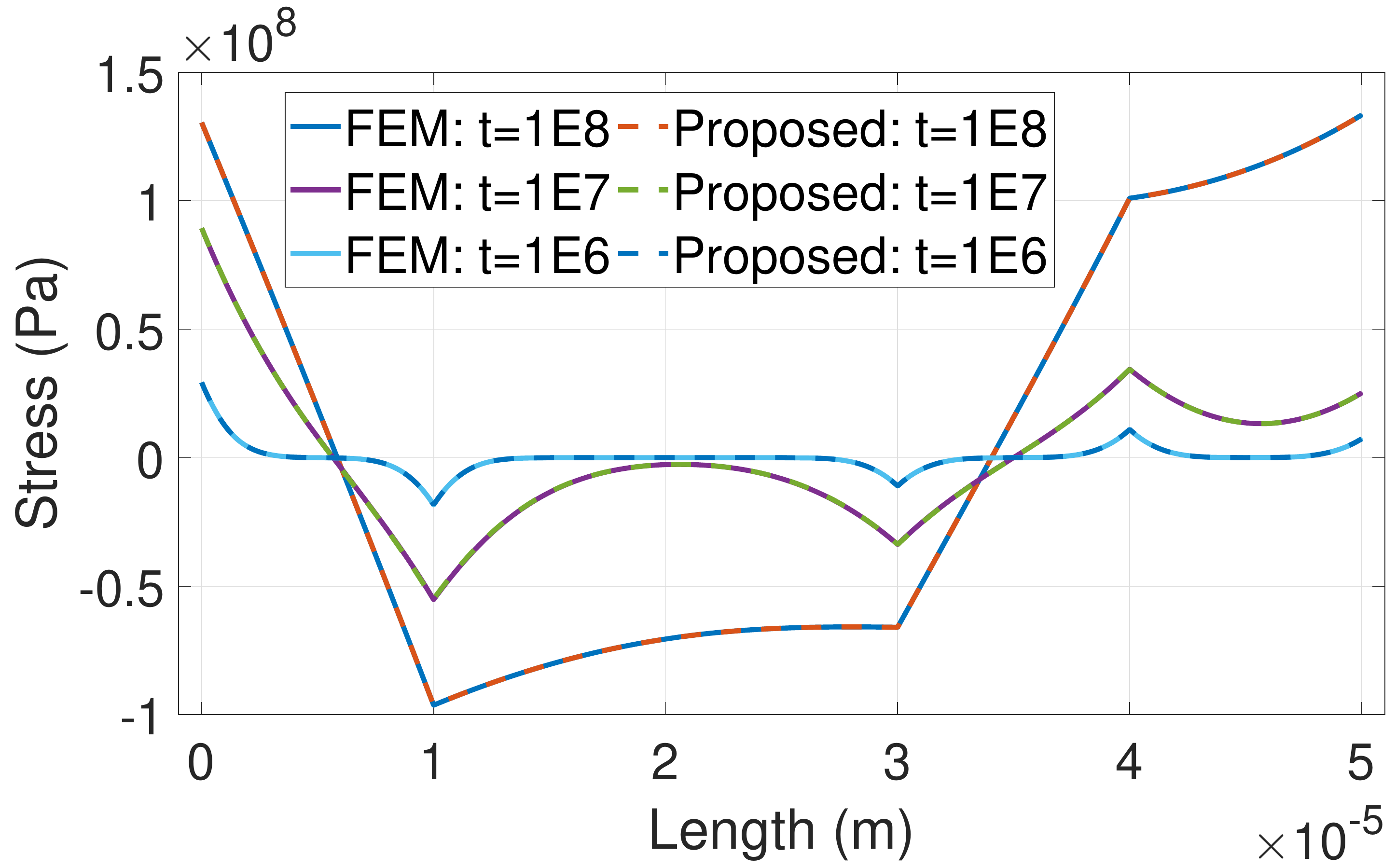}
	\label{fig:1d1_1111}}
	\subfigure[]{
		\includegraphics[width=0.45\columnwidth]{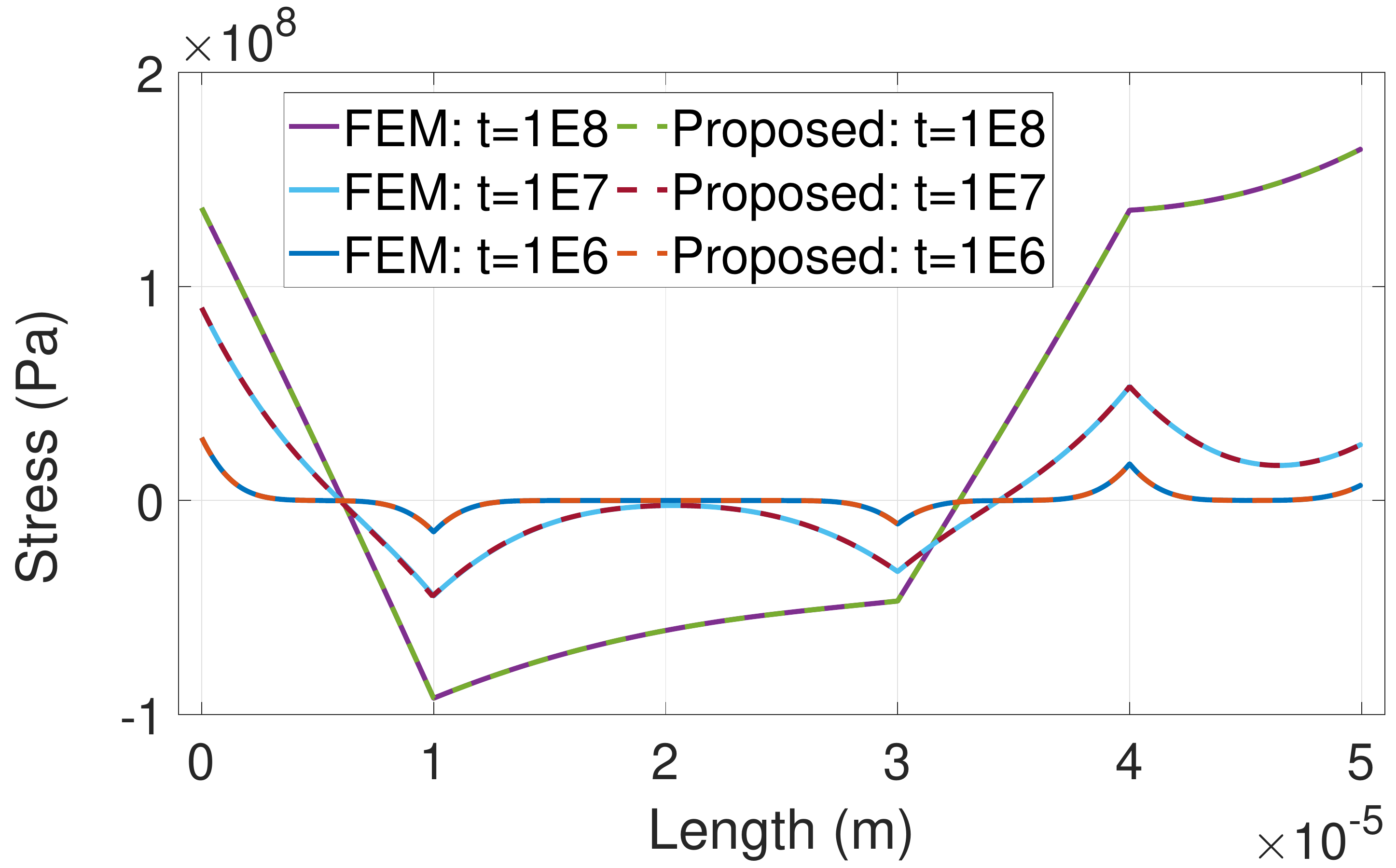}
	\label{fig:1d1_1221}}
		\subfigure[]{
		\includegraphics[width=0.46\columnwidth]{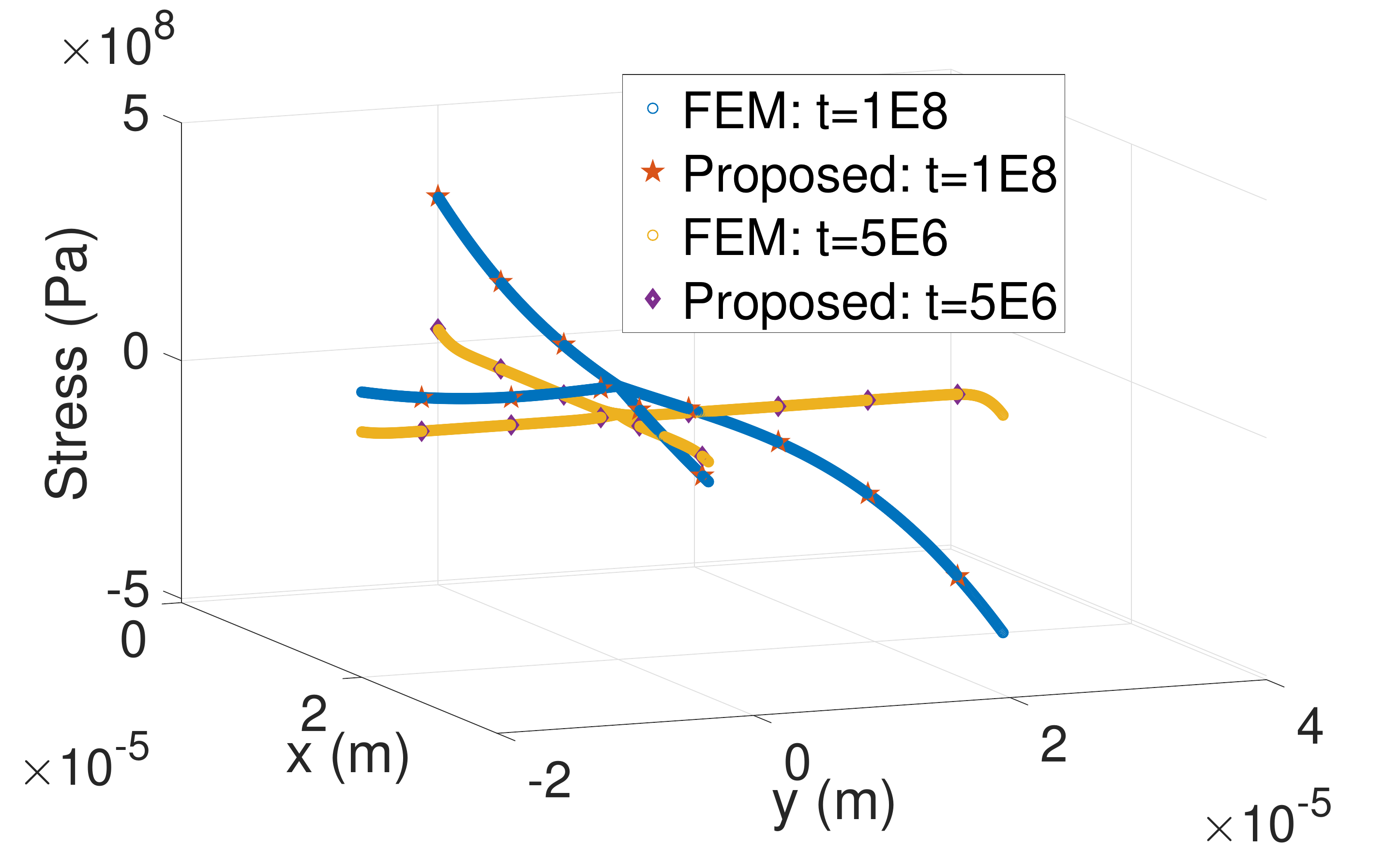}
	\label{fig:2d1_1111}}
	\subfigure[]{
		\includegraphics[width=0.46\columnwidth]{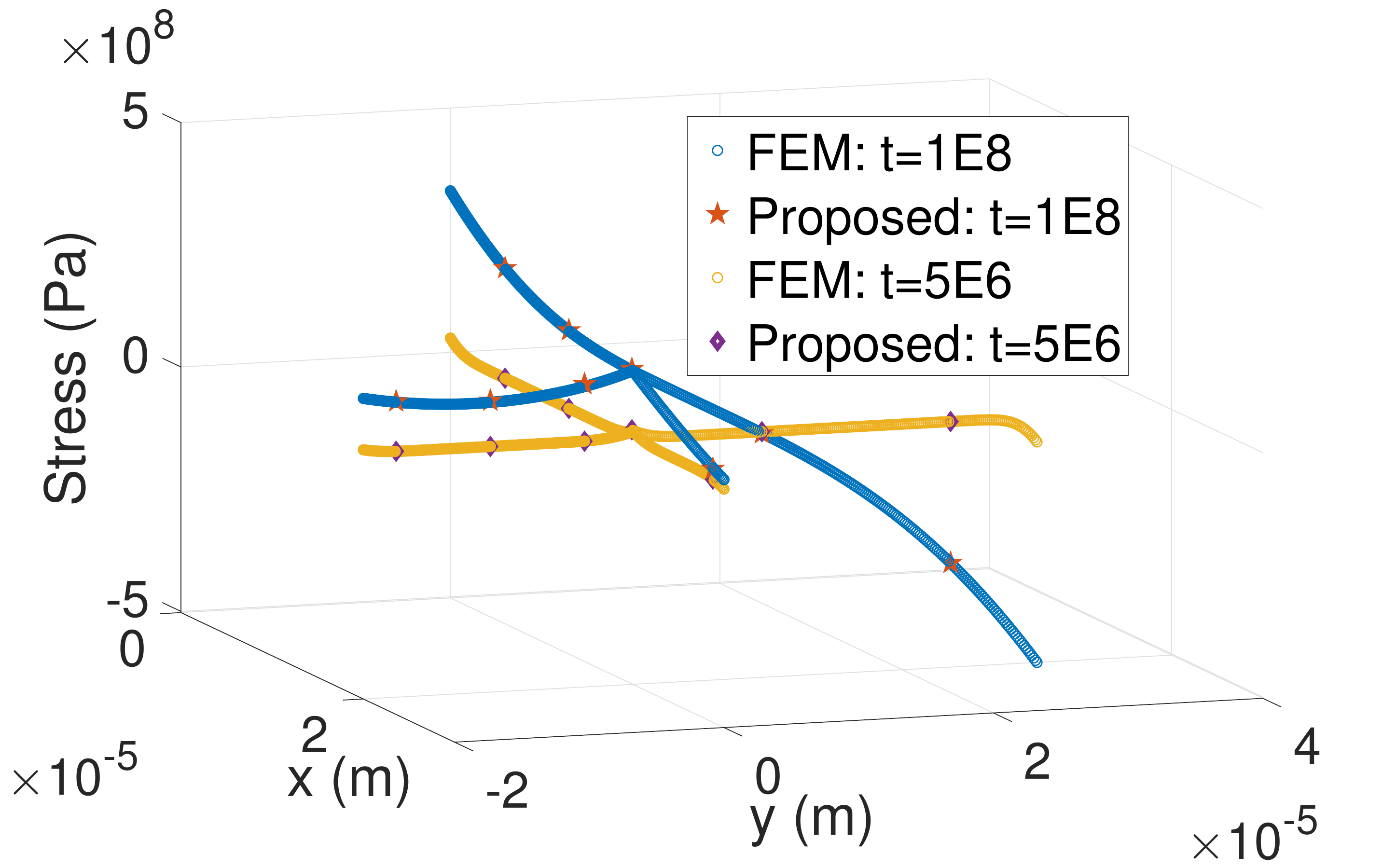}
	\label{fig:2d1_1221}}
	\caption{Stress comparisons of a four-segment straight wire and a cross-shaped five-terminal wire between the proposed method and FEM under different widths: (a) and (c): $w_1=w_2=w_3=w_4=0.1\ \mu m$; (b) and (d): $w_1=w_4=0.1\ \mu m,w_2=w_3=0.2\ \mu m$.}
\label{fig:2d1}
\end{figure}
\begin{figure}[t]
	\centering
	\includegraphics[width=0.85\columnwidth]{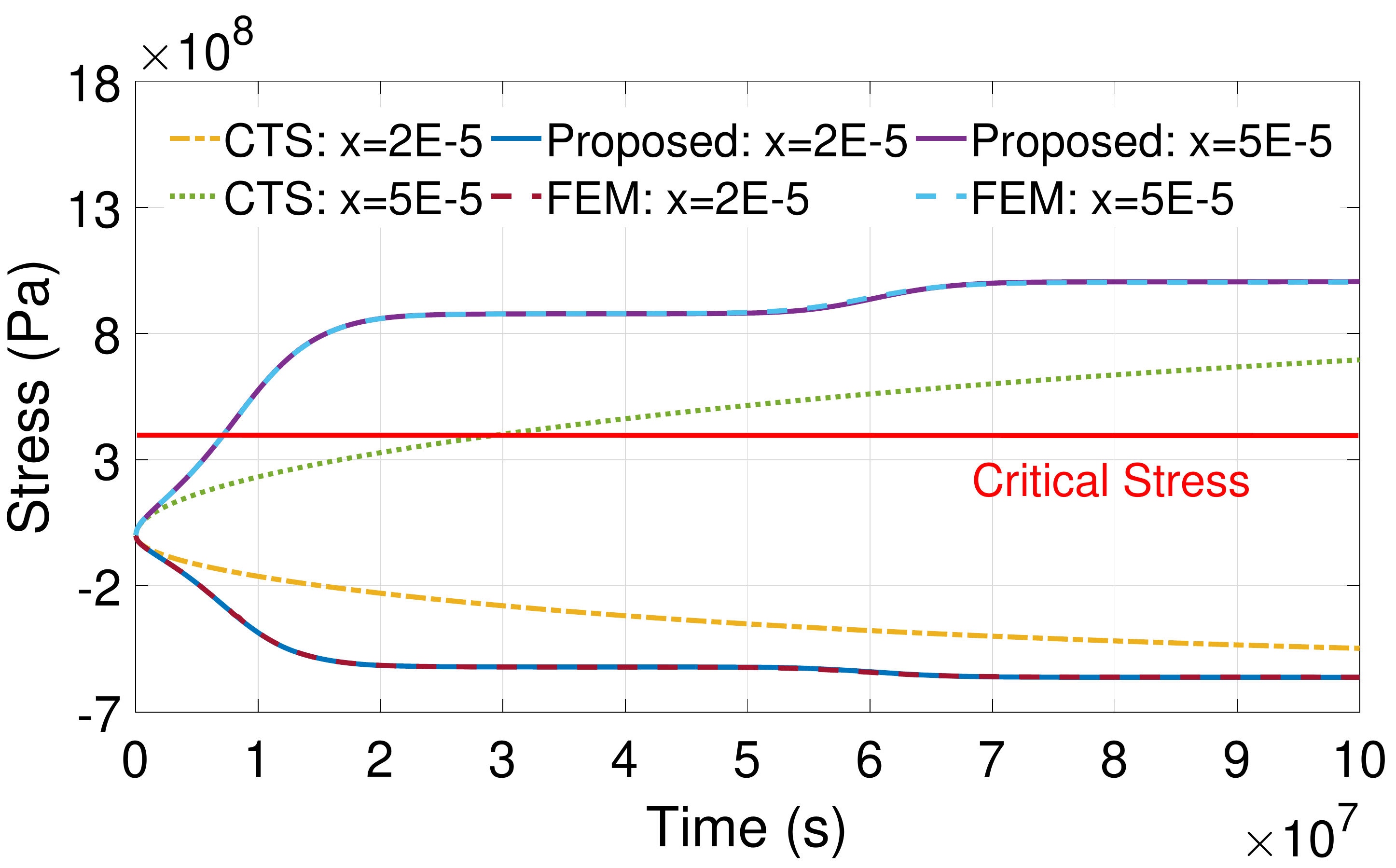}
	\caption{Time-varying temperature dependent stress comparison of a two-segment straight wire versus time. The wire is set as $L_1=20\ \mu m,\ L_2=30\ \mu m,j_1=4\times 10^9\ A/m^2,\ j_2=-1\times 10^{10}\ A/m^2$. CTS represents stress evolution under constant temperature ($350\ K$).}
	\label{fig:1d2dy}
\end{figure}

For the dynamic EM model under time-varying temperature, Fig.~\ref{fig:1d2dy} shows stress evolution at the middle node and the right terminal versus time in the range from $0$s to $1\times 10^8$s. The interconnect tree is configured as a two-segment wire of $L_1=2\ \mu m,\ L_2=3\ \mu m,\ j_1=4\times 10^9\ A/m^2,\ j_2=-1\times 10^{10}\ A/m^2$. We employ a 1-layer MLP with 100 neurons to perform the nonlinear transformation of temporal variables. The time-varying temperature profile and the stress evolution under constant temperature (CTS) are plotted in Fig.~\ref{fig:1d2dy}. It demonstrates that the dynamic model can achieve stress evolution along the whole interconnect tree with 0.62\% relative error against FEM. Although the time-varying temperature has the same average value as the constant temperature, it shows faster evolution speed and shorter void nucleation time under time-varying temperature.

\subsection{Performance Analysis}\label{sec:perf}
\begin{table}[t]
	\caption{Relative error under different numbers of discrete series of Gauss-Legendre integration ($N_g$), and different numbers of training data ($N_c$).}
	\centering
	\setlength{\tabcolsep}{3.8mm}{
		\begin{tabular}{|c|c|c|c|}
			\hline
			\diagbox {$N_c$}{$N_g$}& 8 & 16 & 32 \\ \hline
			10 & 5.31e-3 & 1.39e-3 & 3.11e-3 \\ \hline
			20 & 1.42e-3 & 1.27e-3 & 9.39e-4 \\ \hline
			30 & 6.46e-4 & 5.30e-4 & 5.08e-4 \\ \hline
	\end{tabular}}
	\label{tab:number}
\end{table}
In Table~\ref{tab:number} we report the resulting relative error under different numbers of integral discrete series and training data, while keeping the 5-layer MLP with 50 neurons per layer fixed. The relative error describes the error of stress evolution along the whole interconnect tree at 10 specified time points from $1\times 10^5$s to $1\times 10^8$s compared with FEM. It can be observed that when the number of integral series is larger than 8,  the relative error is reduced as the number of training data increases. It is shown the general trend that prediction accuracy is increased as the number $N_g$ is increased. However, it will cost more computational time in the integral operation with more discrete series.
Considering the trade-off between accuracy and calculation speed, we set the number of integral series as $N_g=8$ for fast trial function calculation. 
Table~\ref{tab:DM} shows the systematic studies of the MLP structure configuration with different numbers of layers and neurons per layer, while the number of training data and integral series are set as $N_c=30,\ N_g=8$. As expected, it can be observed from Table~\ref{tab:DM} that the prediction accuracy is increased as the number of layers and neurons is increased. In this work, we choose a 5-layer MLP with 50 neurons per layer to construct the trial function \eqref{eq:final}.

\begin{table}[t]
	\caption{Relative error under different numbers of hidden layers and neurons per layer in the MLP.}
	\centering
	\setlength{\tabcolsep}{3.8mm}{
		\begin{tabular}{|c|c|c|c|}
			\hline
			\diagbox {Layers}{Neurons} & 30 & 40 & 50 \\ \hline
			3 & 8.78e-4 & 6.31e-4 & 8.05e-4 \\ \hline
			4 & 8.70e-4 & 8.84e-4 & 6.57e-4 \\ \hline
			5 & 7.19e-4 & 9.55e-4 & 6.09e-4 \\ \hline
	\end{tabular}}
	\label{tab:DM}
\end{table}

\begin{figure}[t]
	\centering 
	\subfigure[]{
		\includegraphics[width=0.46\columnwidth]{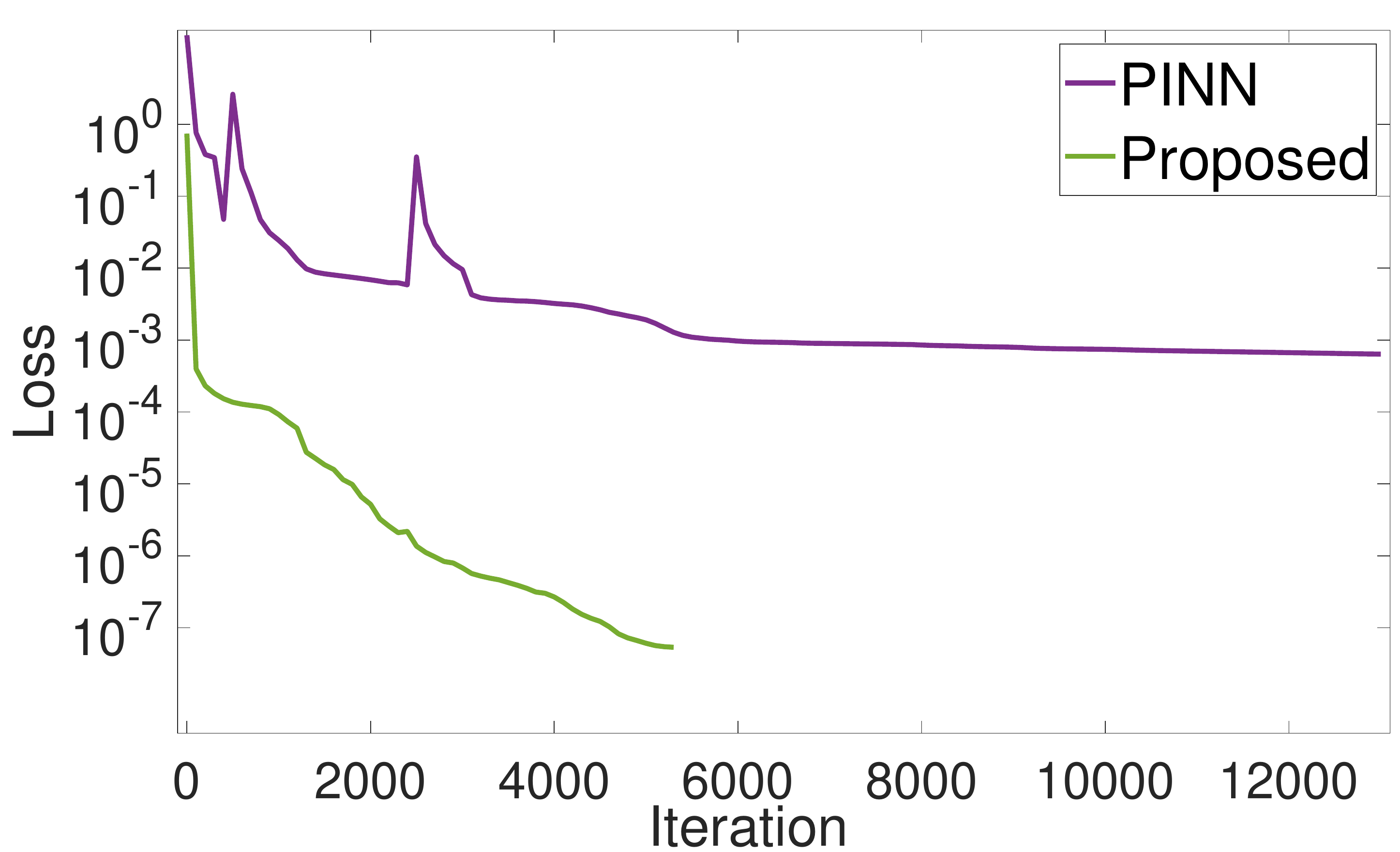}
		\label{fig:lossite}}
	\subfigure[]{
		\includegraphics[width=0.46\columnwidth]{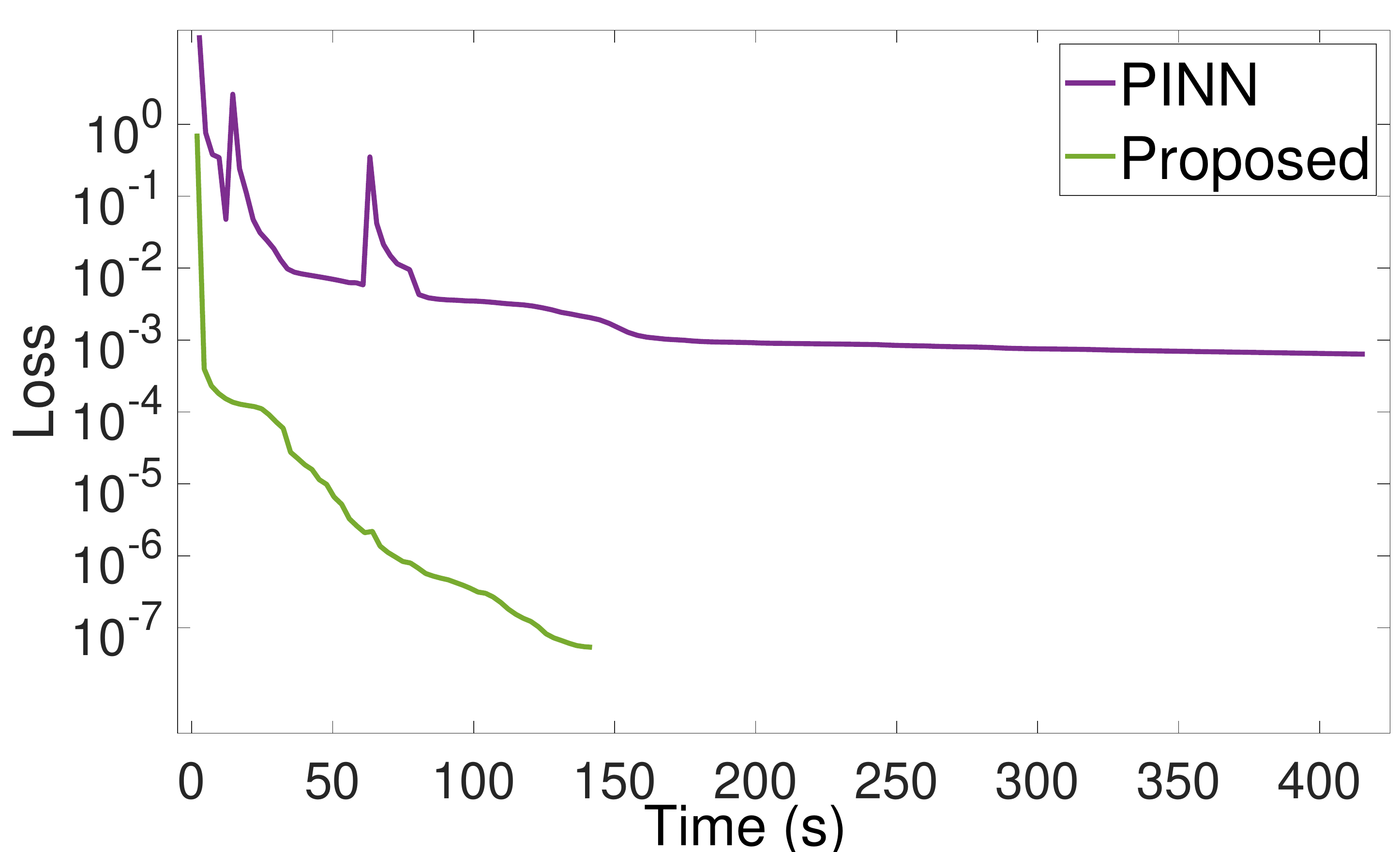}
		\label{fig:lossstim}}
	\caption{Loss vs (a) iteration and (b) time by PINN and the proposed method. Training of the proposed method is early stopped after convergence.}
	\label{fig:loss}
\end{figure}

\begin{figure}[tb]
	\centering
	\includegraphics[width=0.85\columnwidth]{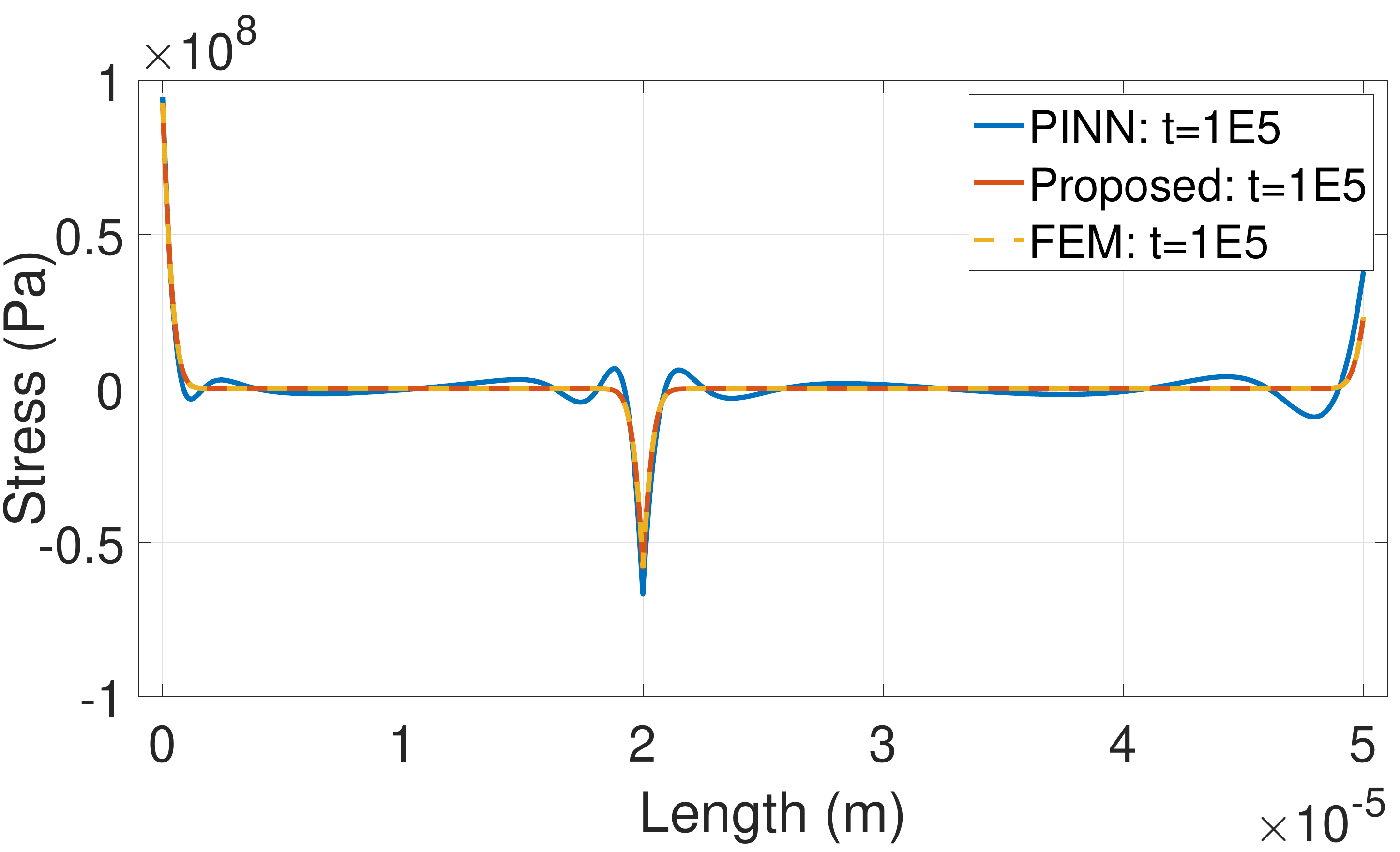}
	\caption{{\color{black}Stress comparison of the two-segment interconnect wires at $t=1\times 10^{5}$s.}}
	\label{fig:t1e5}
\end{figure}

Then, we compare the proposed method with a state-of-the-art learning-based method, the physics-informed neural network (PINN), on the change process of the objective function. PINN is proposed for tasks respecting any given laws of physics governed by PDEs through machine learning \cite{PINN2019:Journal}. We employ a 10-layer MLP with 50 neurons per layer in PINN
and a 5-layer MLP with 50 neurons per layer in the proposed method. The training data size and number of integral series are fixed to $N_c=30,N_g=8$. Figs.~\ref{fig:lossite} \& \ref{fig:lossstim} show the value change of objective function versus the iteration step and the training time when obtaining stress evolution of a four-segment straight wire through PINN and the proposed method. We plot these figures together since the objective functions of these methods are approximately the same.
It can be observed that the loss of the proposed method declines to a lower value with less training time and iteration steps than PINN. The proposed method is early stopped within 150 seconds, while PINN cannot converge to a satisfactory value after 400 seconds. {\color{black}After 13k training iterations, the proposed method and PINN show relative errors of $0.07\%,\ 3.98\%$, respectively.} Hence, compared with PINN, the proposed method is capable of achieving more satisfactory accuracy with less training time. {\color{black}For further runtime comparison, the runtime of COMSOL is $6.49$s, and it costs $4.78$s, $420$s for the training of the proposed method and the PINN-based method to obtain stress evolution with $0.94\%,\ 3.98\%$ relative errors compared with COMSOL, respectively.}
\begin{figure}[tb]
	\centering 
	\subfigure[]{
		\includegraphics[width=0.85\columnwidth]{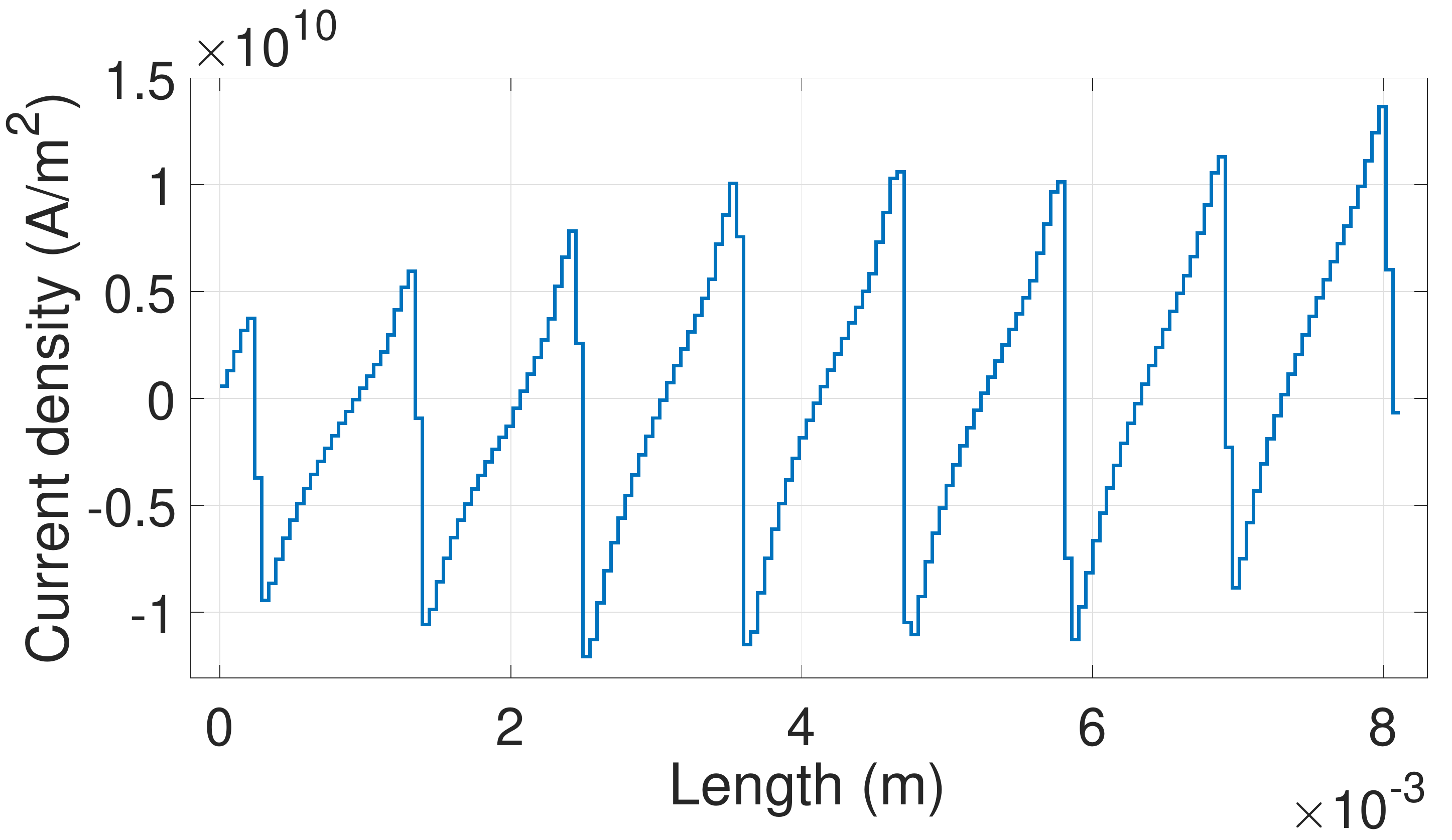}
		\label{fig:1d168j}}
	\subfigure[]{
		\includegraphics[width=0.85\columnwidth]{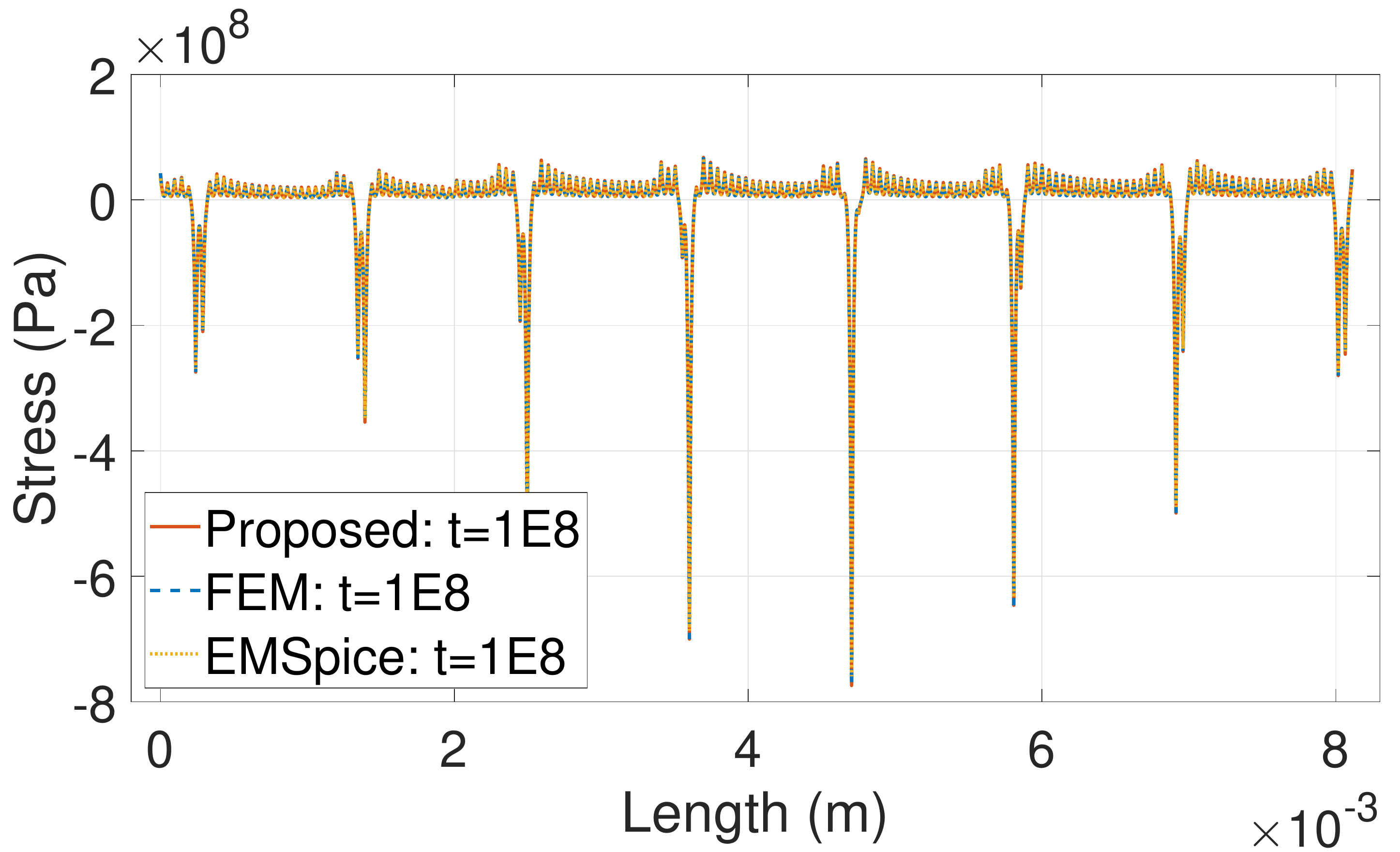}
		\label{fig:1d168}}
	\caption{(a) Current density configuration of a 168-segment interconnect tree; (b) Stress comparison of the 168-segment interconnect tree between the proposed method, FEM and EMSpice.}
	\label{fig:168}
\end{figure}

{\color{black}
Moreover, due to the wide temporal range in the stress evolution evaluation, the performance of the learned model PINN on training data will not be easily maintained on validation data.
As shown in Fig.~\ref{fig:t1e5}, after neural network training of datasets with the time range $0\sim 1\times 10^8$s, PINN shows inaccurate prediction at the early time $t=1\times 10^5$s, while our proposed method shows good agreement with the results obtained by FEM due to the employment of trial function satisfying the physical constraints.}

\subsection{Scalability Analysis}

In this section, we validate the accuracy of the proposed method in multi-segment interconnect trees by FEM and compare the performance with EMSpice simulator. 
In the simulation, we employ a 5-layer MLP with 50 neurons per layer and set the number of integral series as $N_g=8$. The number of training data is set in the range $N_c=10\sim 30$. In the training phase, if a sufficient number of temporal collocation points $t_{i,k}$ in the objective function \eqref{eq:loss} are generated for the network training, we can reduce $N_c$ when the number of segments is increased.
\subsubsection{Multi-segment straight interconnect tree}

\begin{table}[h]
	\centering
	\caption{{\color{black}Scalability performance comparison between EMSpice, COMSOL and the proposed method on increasing $n$-segmented straight interconnect trees.}}
	\setlength{\tabcolsep}{0.8mm}{
\begin{tabular}{|c|c|c|cccccc|}
\hline
\multirow{3}{*}{$n$-segment} & EMSpice & {\color{black}COMSOL} & \multicolumn{6}{c|}{Proposed} \\ \cline{2-9} 
 & \multirow{2}{*}{\begin{tabular}[c]{@{}c@{}}$t_{ems}$\\ (s)\end{tabular}} & \multirow{2}{*}{\begin{tabular}[c]{@{}c@{}}\color{black}$t_{com}$\\ \color{black}(s)\end{tabular}} & \multicolumn{2}{c|}{Inference} & \multicolumn{1}{c|}{Training} & \multicolumn{1}{c|}{\color{black}Total} & \multicolumn{2}{c|}{Error} \\ \cline{4-9} 
 &  &  & \multicolumn{1}{c|}{\begin{tabular}[c]{@{}c@{}}$t_{pre}$\\ (s)\end{tabular}} & \multicolumn{1}{c|}{\begin{tabular}[c]{@{}c@{}}$t_{inf}$\\ (s)\end{tabular}} & \multicolumn{1}{c|}{\begin{tabular}[c]{@{}c@{}}$t_{tra}$\\ (s)\end{tabular}} & \multicolumn{1}{c|}{\begin{tabular}[c]{@{}c@{}}\color{black}$t_{tot}$\\ \color{black}(s)\end{tabular}} & \multicolumn{1}{c|}{\begin{tabular}[c]{@{}c@{}}$\delta$\\ (\%)\end{tabular}} & \begin{tabular}[c]{@{}c@{}}$\delta_{con}$\\ (\%)\end{tabular} \\ \hline
22 & 1.11 & {\color{black}26} & \multicolumn{1}{c|}{0.02} & \multicolumn{1}{c|}{0.07} & \multicolumn{1}{c|}{15.87} & \multicolumn{1}{c|}{\color{black}15.96} & \multicolumn{1}{c|}{0.05} & 0.02 \\ \hline
58 & 7.91 & {\color{black}49} & \multicolumn{1}{c|}{0.03} & \multicolumn{1}{c|}{0.15} & \multicolumn{1}{c|}{21.26} & \multicolumn{1}{c|}{\color{black}21.44} & \multicolumn{1}{c|}{0.02} & 0.02 \\ \hline
76 & 12.45 & {\color{black}80} & \multicolumn{1}{c|}{0.04} & \multicolumn{1}{c|}{0.16} & \multicolumn{1}{c|}{19.90} & \multicolumn{1}{c|}{\color{black}20.10} & \multicolumn{1}{c|}{0.05} & 0.05 \\ \hline
109 & 25.34 & {\color{black}102} & \multicolumn{1}{c|}{0.08} & \multicolumn{1}{c|}{0.26} & \multicolumn{1}{c|}{26.72} & \multicolumn{1}{c|}{\color{black}27.06} & \multicolumn{1}{c|}{0.03} & 0.03 \\ \hline
168 & 48.72 & {\color{black}110} & \multicolumn{1}{c|}{0.10} & \multicolumn{1}{c|}{0.23} & \multicolumn{1}{c|}{56.65} & \multicolumn{1}{c|}{\color{black}56.98} & \multicolumn{1}{c|}{0.09} & 0.04 \\ \hline
236 & 120.23 & {\color{black}115} & \multicolumn{1}{c|}{0.11} & \multicolumn{1}{c|}{0.24} & \multicolumn{1}{c|}{32.55} & \multicolumn{1}{c|}{\color{black}32.90} & \multicolumn{1}{c|}{0.04} & 0.04 \\ \hline
367 & 583.17 & {\color{black}206} & \multicolumn{1}{c|}{0.19} & \multicolumn{1}{c|}{0.28} & \multicolumn{1}{c|}{173.18} & \multicolumn{1}{c|}{\color{black}173.65} & \multicolumn{1}{c|}{1.79} & 0.18 \\ \hline
439 & 1353.65 & {\color{black}277} & \multicolumn{1}{c|}{0.23} & \multicolumn{1}{c|}{0.36} & \multicolumn{1}{c|}{196.93} & \multicolumn{1}{c|}{\color{black}197.52} & \multicolumn{1}{c|}{1.14} & 0.14 \\ \hline
571 & 3569.25 & {\color{black}398} & \multicolumn{1}{c|}{0.36} & \multicolumn{1}{c|}{0.43} & \multicolumn{1}{c|}{241.93} & \multicolumn{1}{c|}{\color{black}242.72} & \multicolumn{1}{c|}{0.82} & 0.13 \\ \hline
{\color{black}702} & {\color{black}4046.67} & {\color{black}556} &\multicolumn{1}{c|}{\color{black}0.41} & \multicolumn{1}{c|}{\color{black}0.56} & \multicolumn{1}{c|}{\color{black}375.45} & \multicolumn{1}{c|}{\color{black}376.42} & \multicolumn{1}{c|}{\color{black}2.97} & {\color{black}0.52} \\ \hline
{\color{black}801} &{\color{black}8071.43} & {\color{black}715} &\multicolumn{1}{c|}{\color{black}0.49} & \multicolumn{1}{c|}{\color{black}0.64} & \multicolumn{1}{c|}{\color{black}582.50} & \multicolumn{1}{c|}{\color{black}583.63} & \multicolumn{1}{c|}{\color{black}2.73} & {\color{black}0.20} \\ \hline
\end{tabular}}
	\label{tab:ntime}
\end{table}

To further validate the performance of the proposed method in multi-segment straight interconnect trees, we analyze the EM-induced stress on interconnects extracted from International Business Machines Corporation (IBM) power grid benchmark IBMPG2-IBMPG4 structure~\cite{Nass2008:ASPDAC}.
Figs.~\ref{fig:1d168j} \&~\ref{fig:1d168} show the current density configuration of a 168-segment interconnect tree extracted from IBMPG2 and the stress evolution comparison, which demonstrates good agreements within $0.04\%$ error. The results demonstrate that the proposed method can be implemented for stress evolution analysis on multi-segment straight interconnect tress with promising accuracy.

\begin{table*}[h]
	\caption{{\color{black}Relative errors under a simple linear regression, a single-layer neural network and a 5-layer MLP.}}
	\centering
	\setlength{\tabcolsep}{2.7mm}{
	\begin{tabular}{|c|c|c|c|c|c|c|c|c|c|c|c|}
\hline
\diagbox {Network}{$n$-segment}& 22 & 58 & 76 & 109 & 168 & 236 & 367 & 439 & 571& 702 & 801 \\ \hline
Linear Regression & 1.8e-4 & 1.4e-4 & 2.9e-4 & 3.3e-4 & 1.1e-3 & 1.9e-4 & 4.8e-2 & 5.6e-2 & 3.4e-2 & 3.2e-2 & 1.8e-1 \\ \hline
Single-layer Neural Network & 1.8e-4 & 5.3e-4 & 8.7e-4 & 1.8e-3 & 7.1e-4 & 7.3e-4 & 2.8e-2 & 2.7e-2 & 1.0e-2 & 2.5e-2 & 1.4e-1 \\ \hline
5-layer MLP & 1.8e-4 & 2.3e-4 & 5.3e-4 & 3.3e-4 & 3.9e-4 & 3.7e-4 & 1.8e-3 & 1.4e-3 & 1.3e-3 & 5.2e-3 & 2.0e-3 \\ \hline
\end{tabular}}
	\label{tab:linear}
\end{table*}

Moreover, we perform the proposed method and EMSpice simulations on increasing $n$-segmented straight interconnect trees to demonstrate the computational savings and satisfactory accuracy of the proposed method, shown in Table~\ref{tab:ntime}.
{\color{black}Here, we record the runtime of EMSpice employing 100 temporal iterations as $t_{ems}$ and the sum of runtime for COMSOL modeling and high-accuracy computation as $t_{com}$, respectively.}
For the inference phase of the proposed method, the runtime $t_{pre}$ and $t_{inf}$ are the data preparation time for the observed space-time input data and the inference time for obtaining stress development at 10 specified time points from $1\times 10^5$s to $1\times 10^8$s. The notation $t_{tra}$ represents the training time for each case within 2k iterations and $\delta$ represents the corresponding relative error of the trained model. {\color{black}The total runtime $t_{tot}$ of the proposed method is the sum of $t_{pre},\ t_{inf}$ and $t_{tra}$.}
The proposed method consumes little time for test data preparation and stress evolution inference, demonstrating great computational savings of the proposed method.
This saving is more obvious in the stress prediction of interconnect trees with more segments.
Although the adjustable parameters in the proposed method are required to be trained for specific interconnects configured with varying current densities, this can be alleviated via offline training \cite{ZHU2019:jcp}. 
{\color{black}Since the second-order optimization based method L-BFGS is employed in the proposed method, the training time scales with the grid size as $O(\beta mn)$ when the numbers of layers and neurons per layer are fixed.
The notation $m$ is a small number (typically between five and ten) related to the L-BFGS technique and $\beta$ is the number of training iterations. The accuracy of the learned model will vary with different interconnect cases.}
The proposed method will show more promising accuracy with more training iterations and we record the error of the proposed method by $\delta_{con}$ when the training is converged.
{\color{black}It can be observed from Table~\ref{tab:ntime} that when the number of interconnect segments is larger than 236, $t_{tot}$ is smaller than the execution time of EMSpice.} Besides, the proposed method shows increasing performance gain in running speed as the number of segments increases.
{\color{black}Compared with competing methods such as EMSpice and FEM, the proposed method requires less discrete integration series without a mesh generation, while keeping satisfactory approximation accuracy.

Since Table~\ref{tab:DM} shows that the error does not change much with the number of layers, we reduced MLP into a simple linear regression and a single-layer neural network, and reported the experimental results in Table~\ref{tab:linear}, showing relative errors of stress prediction on $n$-segment interconnects under three different neural networks. It can be observed that both the linear regression model and the single-layer neural network model can obtain high accuracy for interconnects with few segments. However, the accuracy of both the linear regression model and the linear regression model  will reduce as the number of segments increases, especially when the number of segments is larger than 236. By employing a 5-layer MLP, satisfactory accuracy can be achieved as the segment number increases.
}

\begin{figure}[t]
	\centering
	\includegraphics[width=0.95\columnwidth]{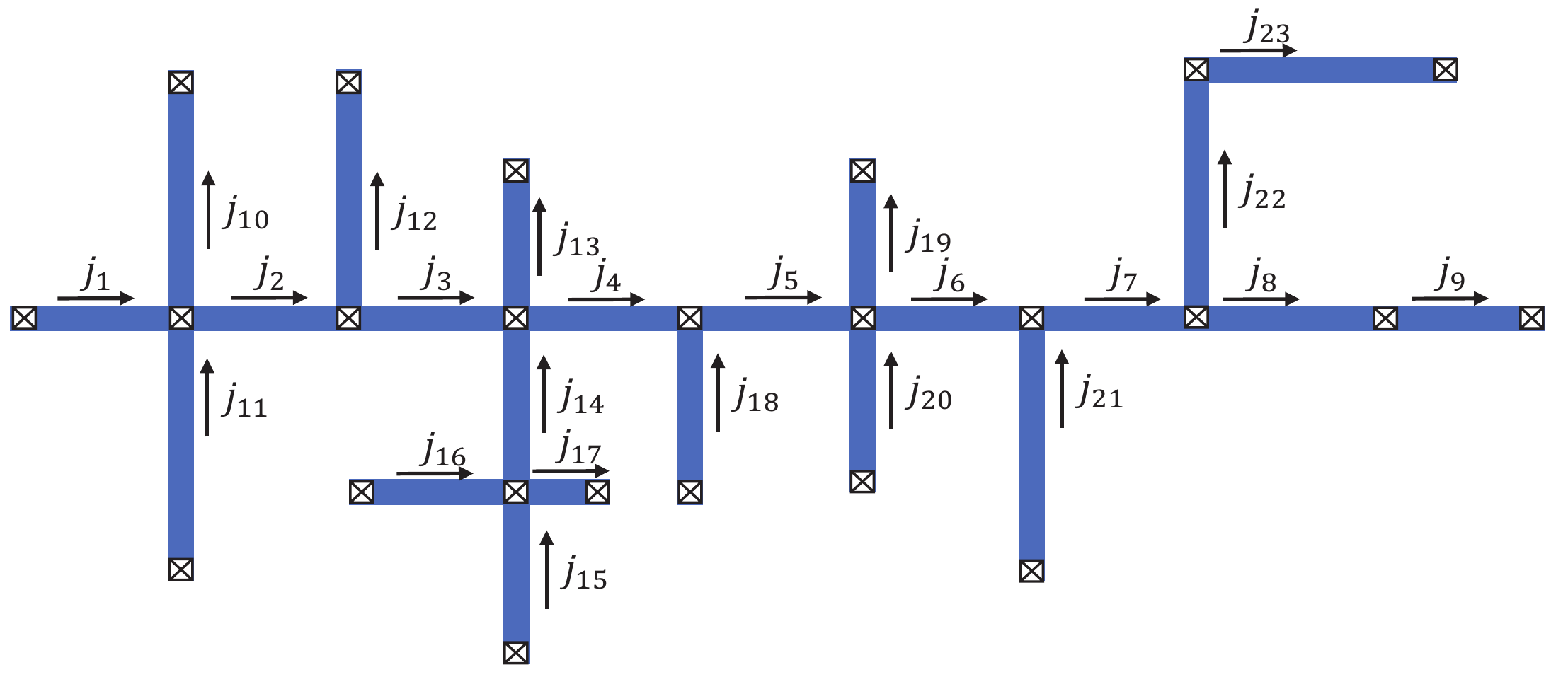}
	\caption{A 23-segment complex interconnect tree structure.}
	\label{fig:cinter}
\end{figure}
\begin{figure}[t]
	\centering
	\includegraphics[width=0.95\columnwidth]{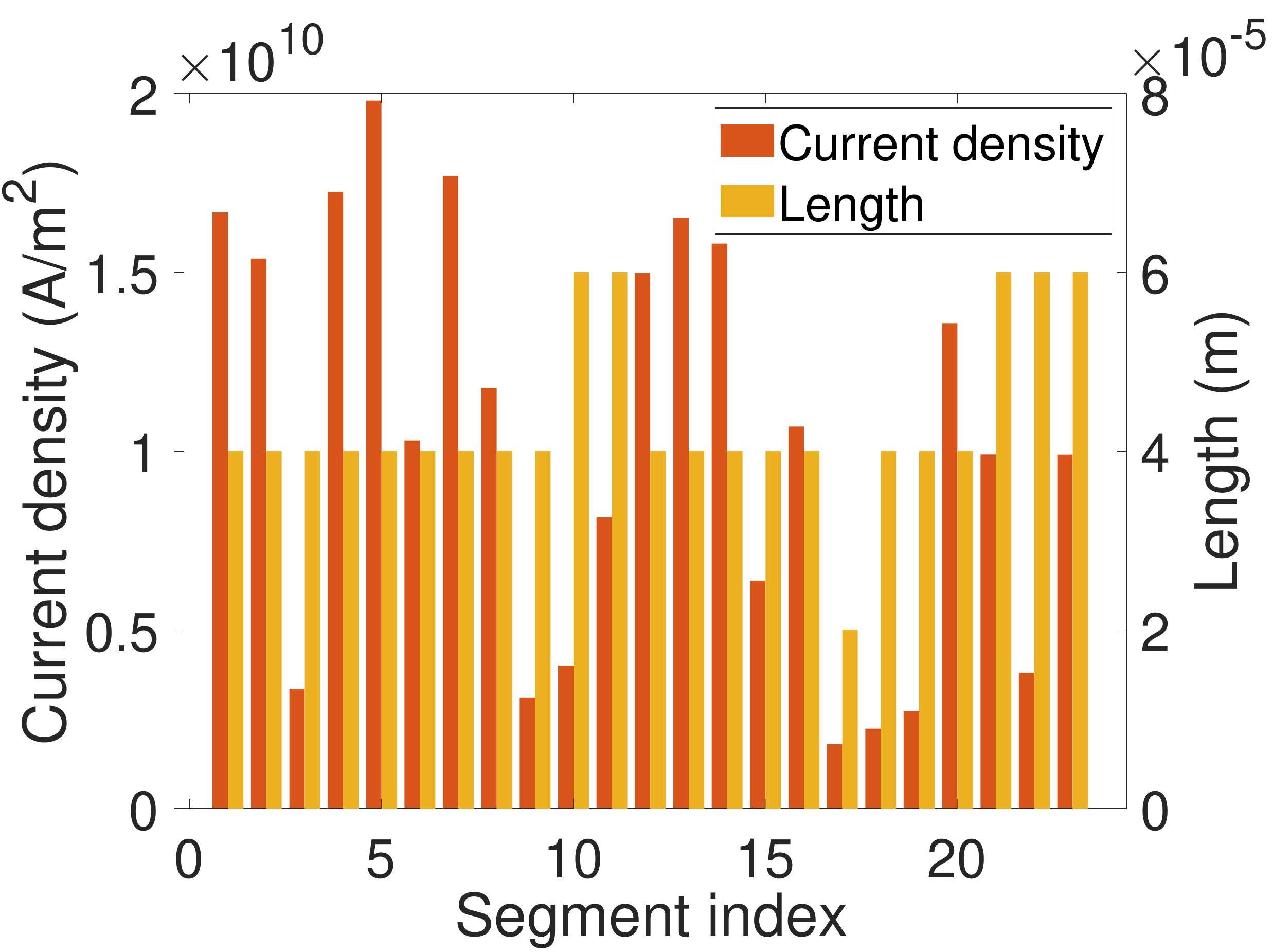}
	\caption{Configuration of current density and length for the 23-segment complex interconnect tree.}
	\label{fig:interconfig}
\end{figure}
\begin{figure}[t]
	\centering 
	\subfigure[]{
		\includegraphics[width=0.45\columnwidth]{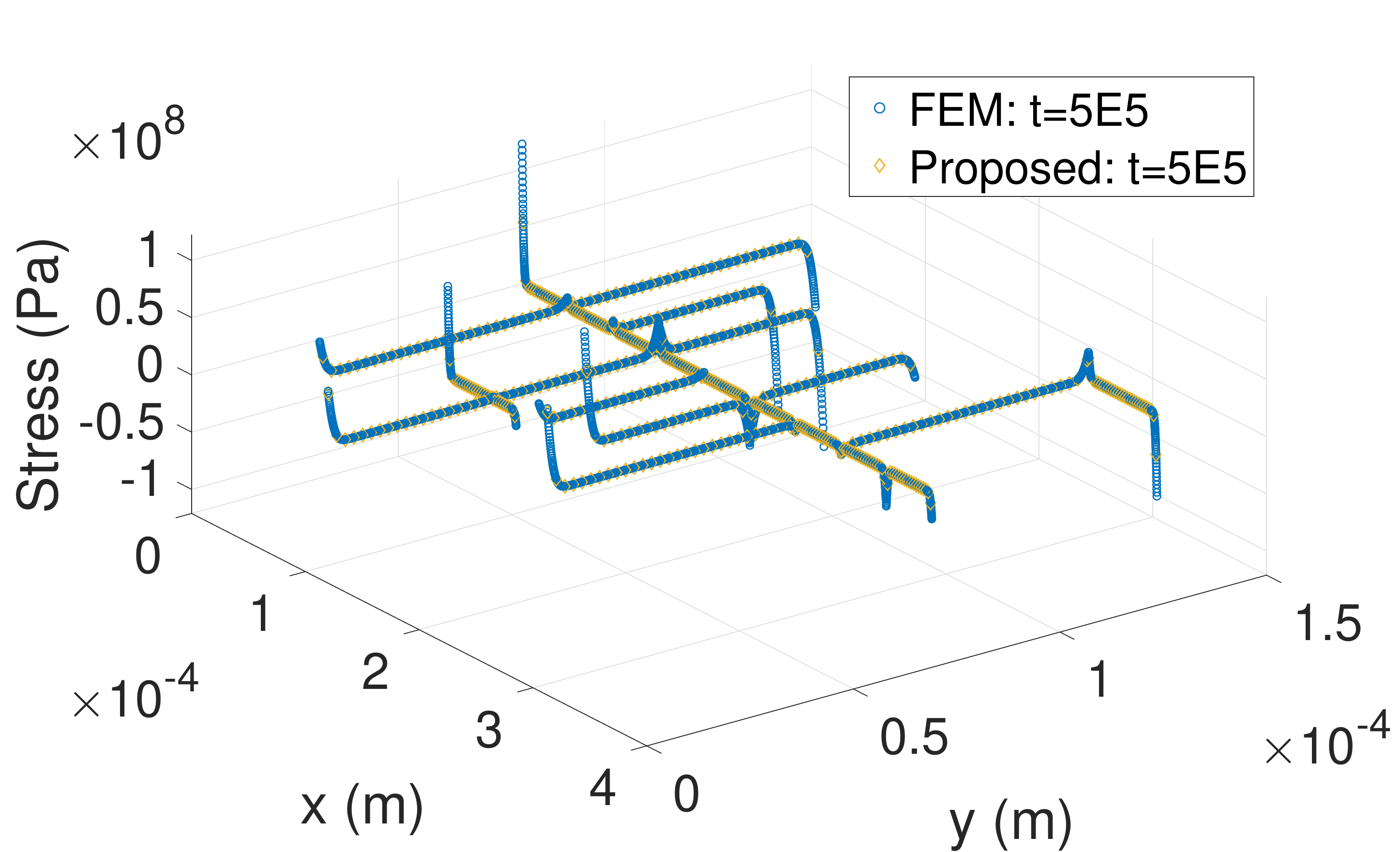}
	\label{fig:23t5e5}}
	\subfigure[]{
		\includegraphics[width=0.45\columnwidth]{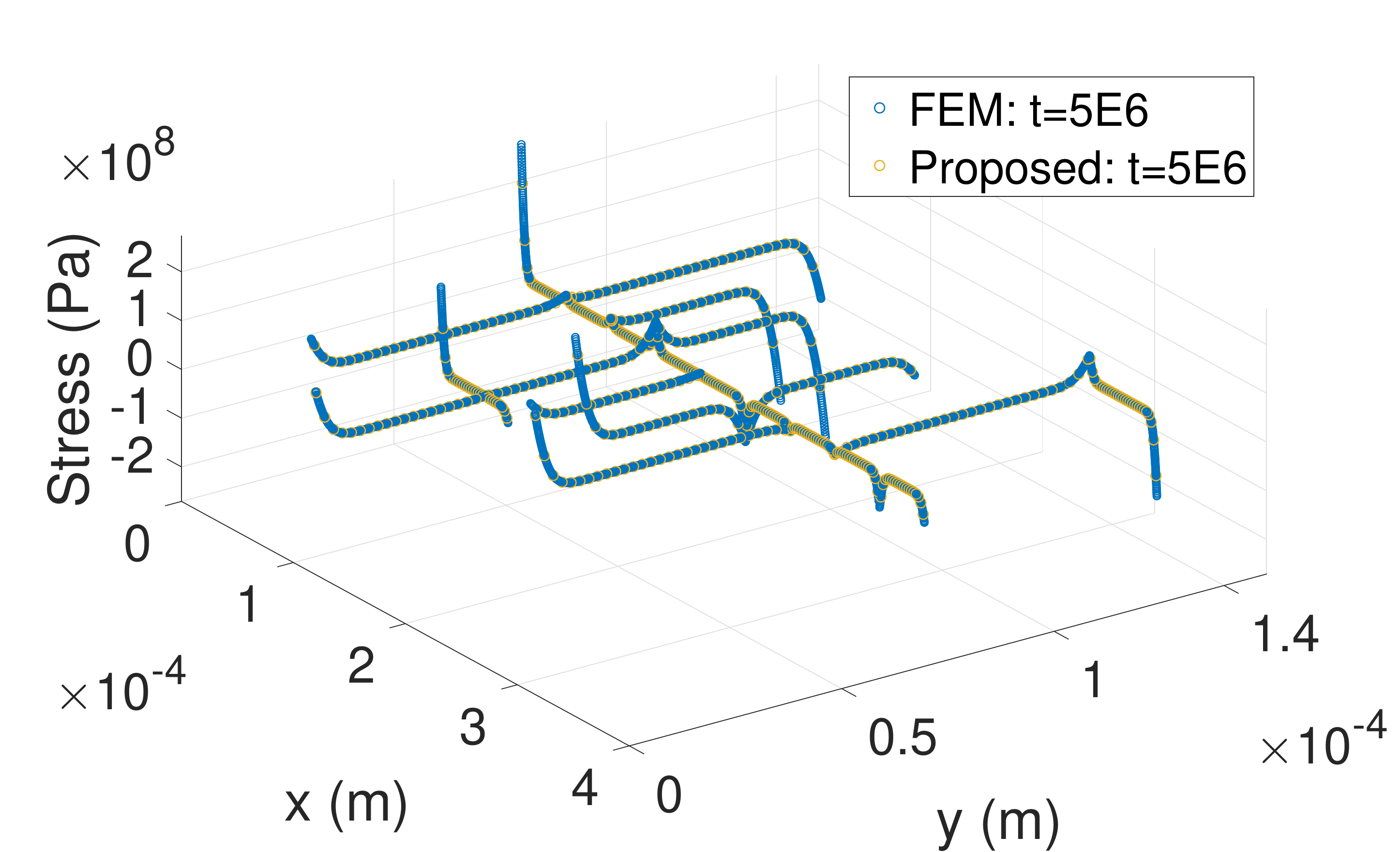}
	\label{fig:23t5e6}}
		\subfigure[]{
		\includegraphics[width=0.45\columnwidth]{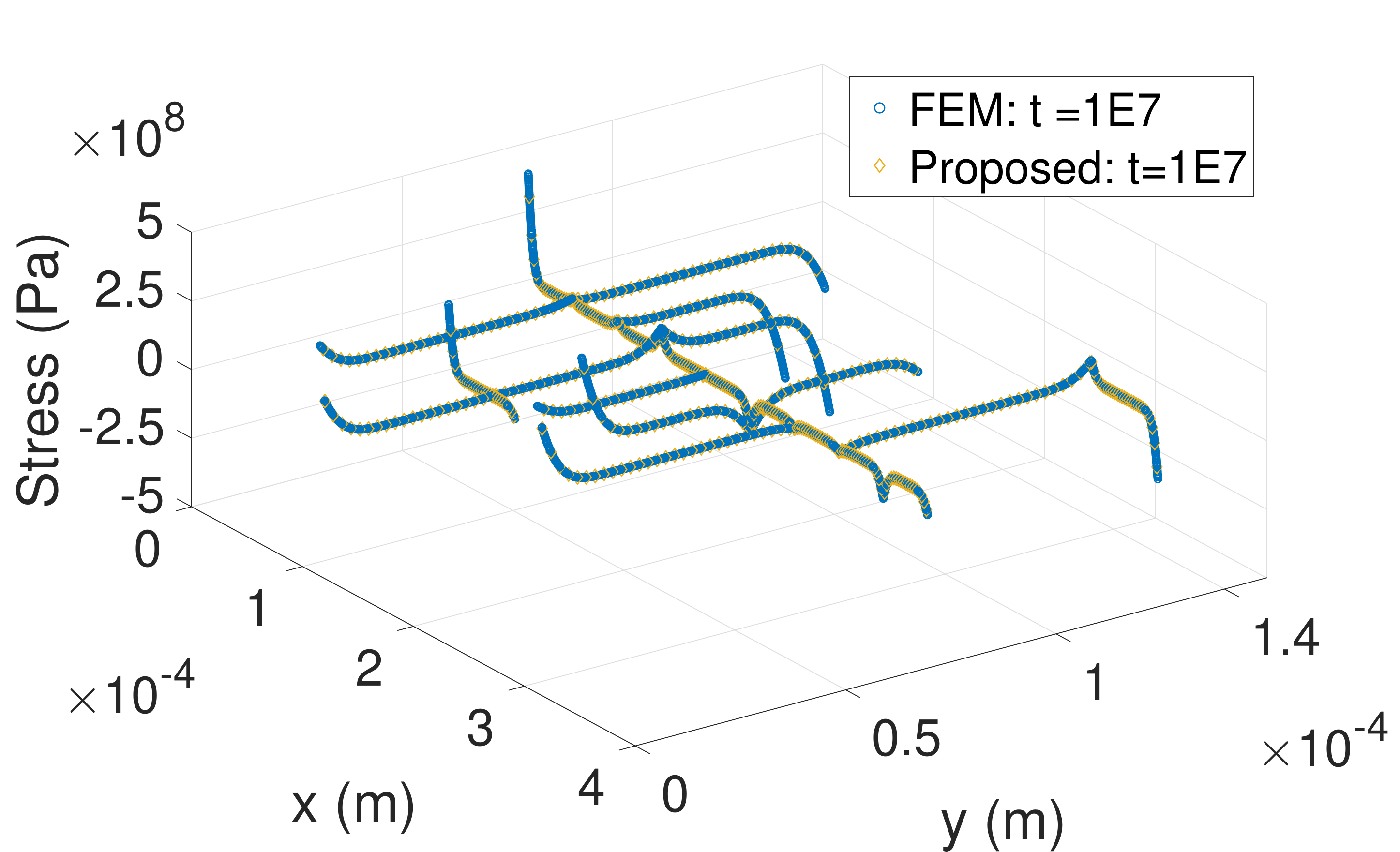}
	\label{fig:23t1e7}}
	\subfigure[]{
		\includegraphics[width=0.45\columnwidth]{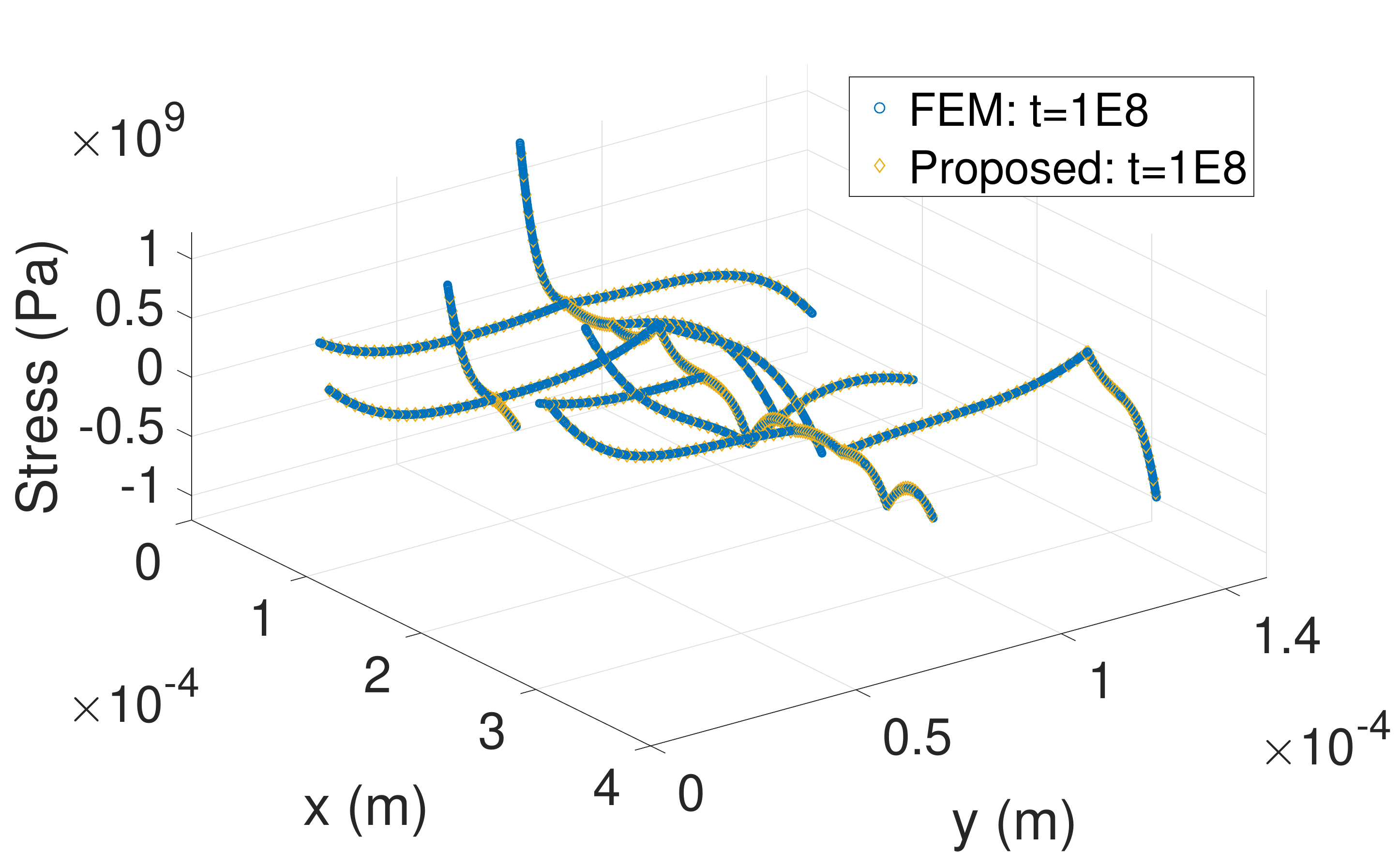}
	\label{fig:23t1e8}}
	\caption{Stress comparison of a 23-segment complex interconnect tree between the proposed method and FEM at (a) $t=5\times 10^5$s; (b) $t=5\times 10^6$s; (c) $t=1\times 10^7$s; (d) $t=1\times 10^8$s.}
	\label{fig:complex}
\end{figure}

\begin{table}[t]
	\caption{{\color{black}Scalability performance comparison between COMSOL and the proposed method on increasing $n$-segmented complex interconnect trees.}}
	\centering
	\setlength{\tabcolsep}{1.5mm}{
\begin{tabular}{|c|c|cccccc|}
\hline
\multirow{3}{*}{$n$-segment} & {\color{black}COMSOL} & \multicolumn{6}{c|}{Proposed} \\ \cline{2-8} 
 & \multirow{2}{*}{\begin{tabular}[c]{@{}c@{}}\color{black}$t_{com}$\\ \color{black}(s)\end{tabular}} & \multicolumn{2}{c|}{Inference} & \multicolumn{1}{c|}{Training} & \multicolumn{1}{c|}{\color{black}Total} & \multicolumn{2}{c|}{Error} \\ \cline{3-8} 
 &  & \multicolumn{1}{c|}{\begin{tabular}[c]{@{}c@{}}$t_{pre}$\\ (s)\end{tabular}} & \multicolumn{1}{c|}{\begin{tabular}[c]{@{}c@{}}$t_{inf}$\\ (s)\end{tabular}} & \multicolumn{1}{c|}{\begin{tabular}[c]{@{}c@{}}$t_{tra}$\\ (s)\end{tabular}} & \multicolumn{1}{c|}{\begin{tabular}[c]{@{}c@{}}\color{black}$t_{tot}$\\ \color{black}(s)\end{tabular}} & \multicolumn{1}{c|}{\begin{tabular}[c]{@{}c@{}}$\delta$\\ (\%)\end{tabular}} & \begin{tabular}[c]{@{}c@{}}$\delta_{con}$\\ (\%)\end{tabular} \\ \hline
23 & \color{black}56 & \multicolumn{1}{c|}{0.35} & \multicolumn{1}{c|}{0.15} & \multicolumn{1}{c|}{57.29} & \multicolumn{1}{c|}{\color{black}57.79} & \multicolumn{1}{c|}{1.41} & 0.38 \\ \hline
38 & \color{black}98 & \multicolumn{1}{c|}{0.50} & \multicolumn{1}{c|}{0.18} & \multicolumn{1}{c|}{65.85} & \multicolumn{1}{c|}{\color{black}66.53} & \multicolumn{1}{c|}{1.49} & 0.68 \\ \hline
84 & \color{black}235 & \multicolumn{1}{c|}{1.13} & \multicolumn{1}{c|}{0.30} & \multicolumn{1}{c|}{88.93} & \multicolumn{1}{c|}{\color{black}90.36} & \multicolumn{1}{c|}{1.47} & 0.48 \\ \hline
161 & \color{black}434 & \multicolumn{1}{c|}{2.56} & \multicolumn{1}{c|}{0.45} & \multicolumn{1}{c|}{237.24} & \multicolumn{1}{c|}{\color{black}240.25} & \multicolumn{1}{c|}{2.06} & 0.50 \\ \hline
\end{tabular}}
	\label{tab:ncomplex}
\end{table} 

\subsubsection{Multi-segment complex interconnect tree}

In real power interconnects of the standard cell, there are complex interconnect trees containing nodes with more than two adjacent segments \cite{chen2020:tcad}. 
Fig.~\ref{fig:cinter} shows the structure of a 23-segment complex interconnect tree and Fig.~\ref{fig:interconfig} shows the configured current density and length of each segment. The comparison of stress evolution solution under the constant temperature between the proposed method and FEM is shown in Fig.~\ref{fig:complex}. Furthermore, Table~\ref{tab:ncomplex} describes the scalability of increasing $n$-segmented complex interconnect trees through the proposed method. 
The results show that the proposed method consumes more time in the data preparation and the inference procedure of multi-segment complex interconnects than those of multi-segment straight interconnects due to the increasing data related to adjacent segments of the complex structure. For the cases shown in Table~\ref{tab:ncomplex}, the proposed method can achieve stress evolution with relative errors less than $2.06\%$ within time consumption $237.24$s for training and $3.01$s for testing, demonstrating more computational savings than COMSOL. The convergence errors of complex interconnect trees are less than $0.68\%$.


{\color{black}\subsection{Parameterized Label-free Modeling}
In this section, we extend our proposed method for parameterized simulations involving multiple varying geometric and current density parameters, which results in that the learned model can generalize to unseen cases. Since the stress evolution is related to the global characteristics of the interconnect wire such as geometry and current densities, the adjacent node coordinates of $\mathbb{C}_{i_j}$ are employed as the additional inputs of the MLP model. In this way, for the stress analysis of two-segment interconnect wires, the global interconnect geometry and current density are included in the input of the MLP model. We randomly generated 1k sets of two-segment interconnect wires with varying current densities and lengths shorter than $100 \mu m$ to construct training datasets. The proposed method is label-free since no prior knowledge of stress evolution (label) is required during the training procedure. We employed a 5-layer MLP with 50 neurons per layer and set $N_g=8,\ N_c=30$ for each training batch. To validate the accuracy of the learned model, Fig.~\ref{fig:parapic} shows the stress evolution results of the test cases describing two-segment interconnect wires, where the four test cases are completely unseen during training.
The proposed label-free method can extrapolate to brand new test cases with no need for retraining.
Compared with the FEM based tool COMSOL, the results of the proposed label-free method demonstrate $2.87\%$ average relative error on the four new test cases. Since no training is required for each new case, the time cost of the proposed method is $0.006$s ($1099\times$ faster than COMSOL and $13\times$ faster than EMSpice), showing significant computational savings.

\begin{figure}[t]
	\centering 
	\subfigure[]{
		\includegraphics[width=0.45\columnwidth]{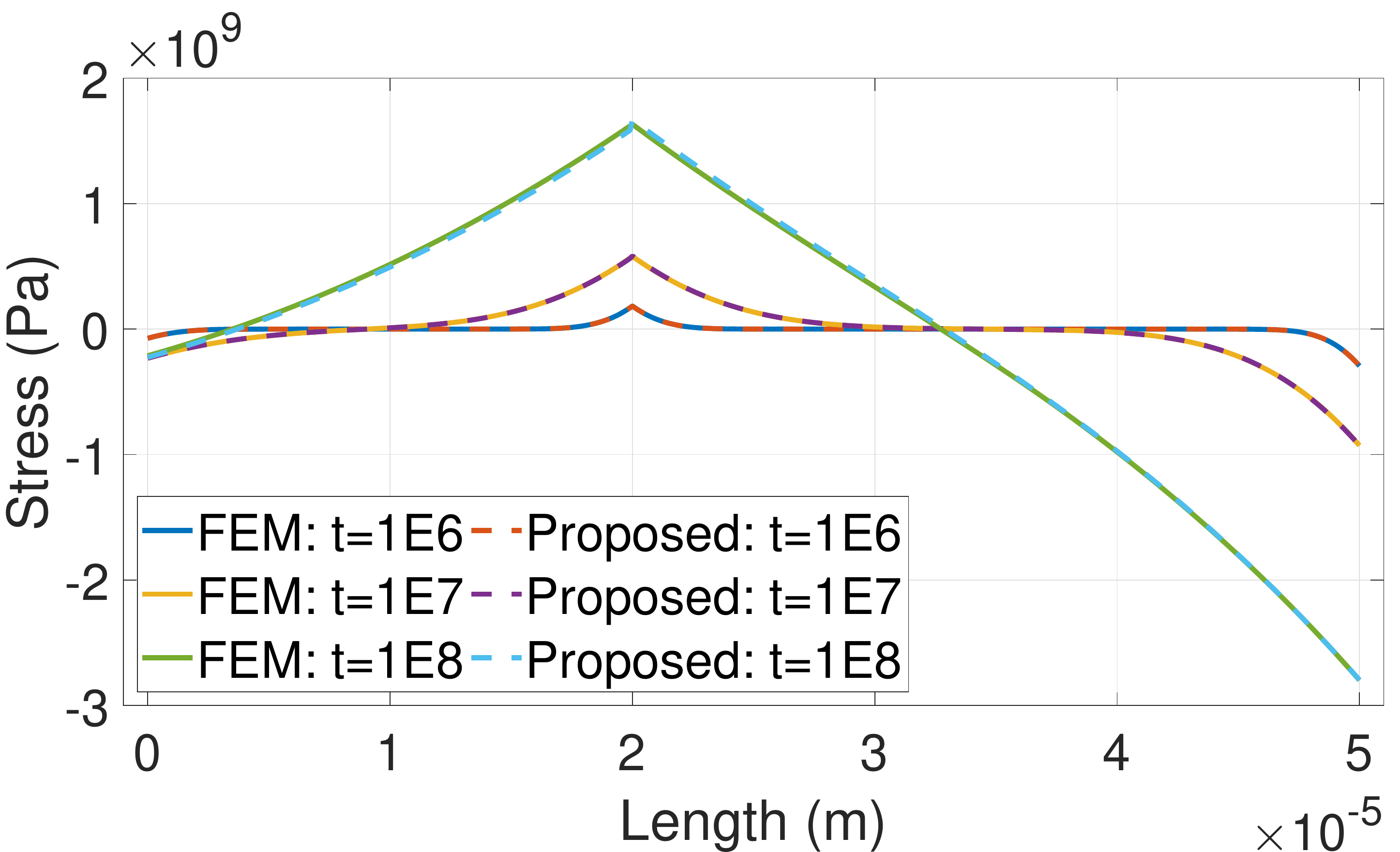}
	\label{fig:pic1}}
	\subfigure[]{
		\includegraphics[width=0.45\columnwidth]{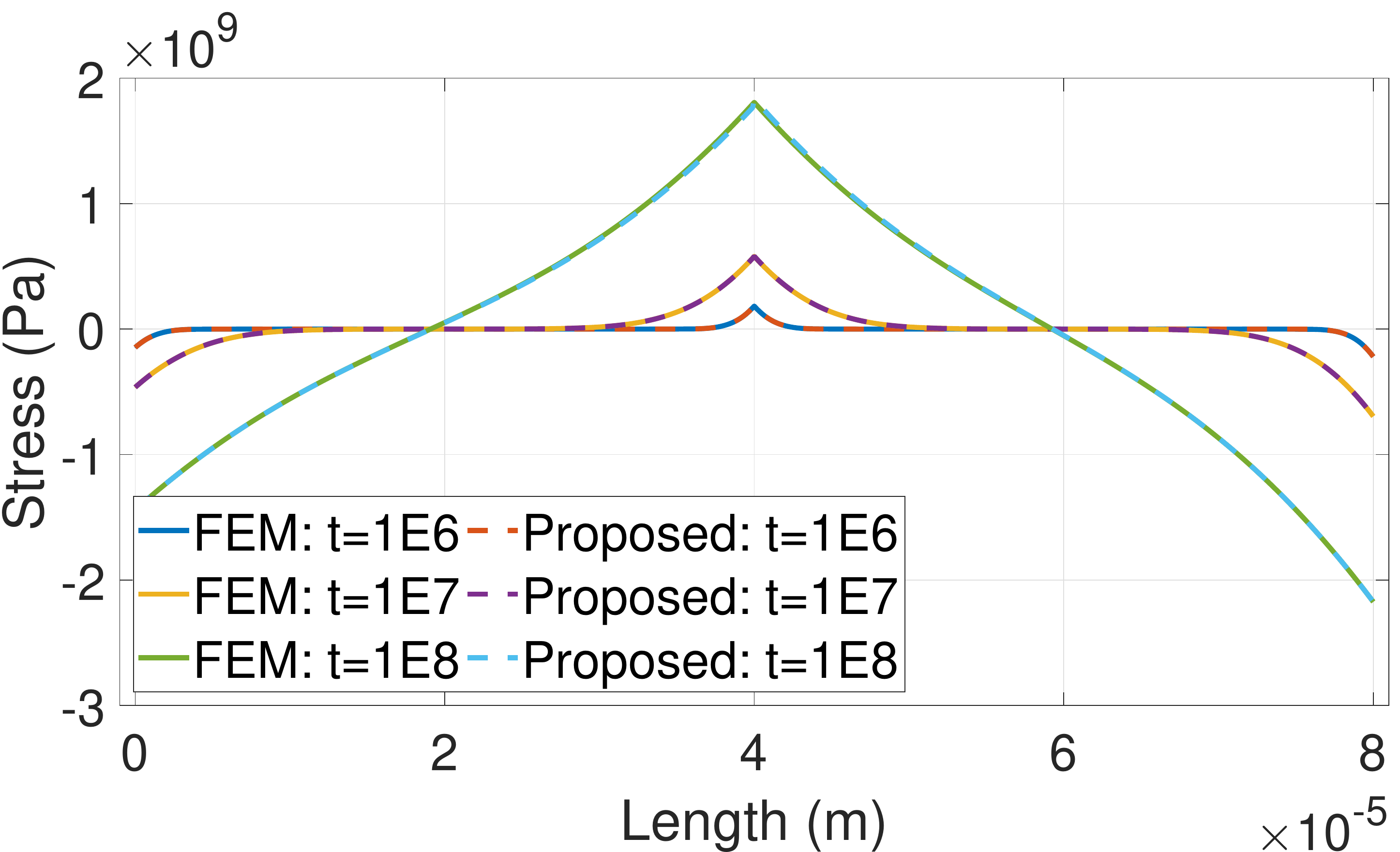}
	\label{fig:pic2}}
		\subfigure[]{
		\includegraphics[width=0.46\columnwidth]{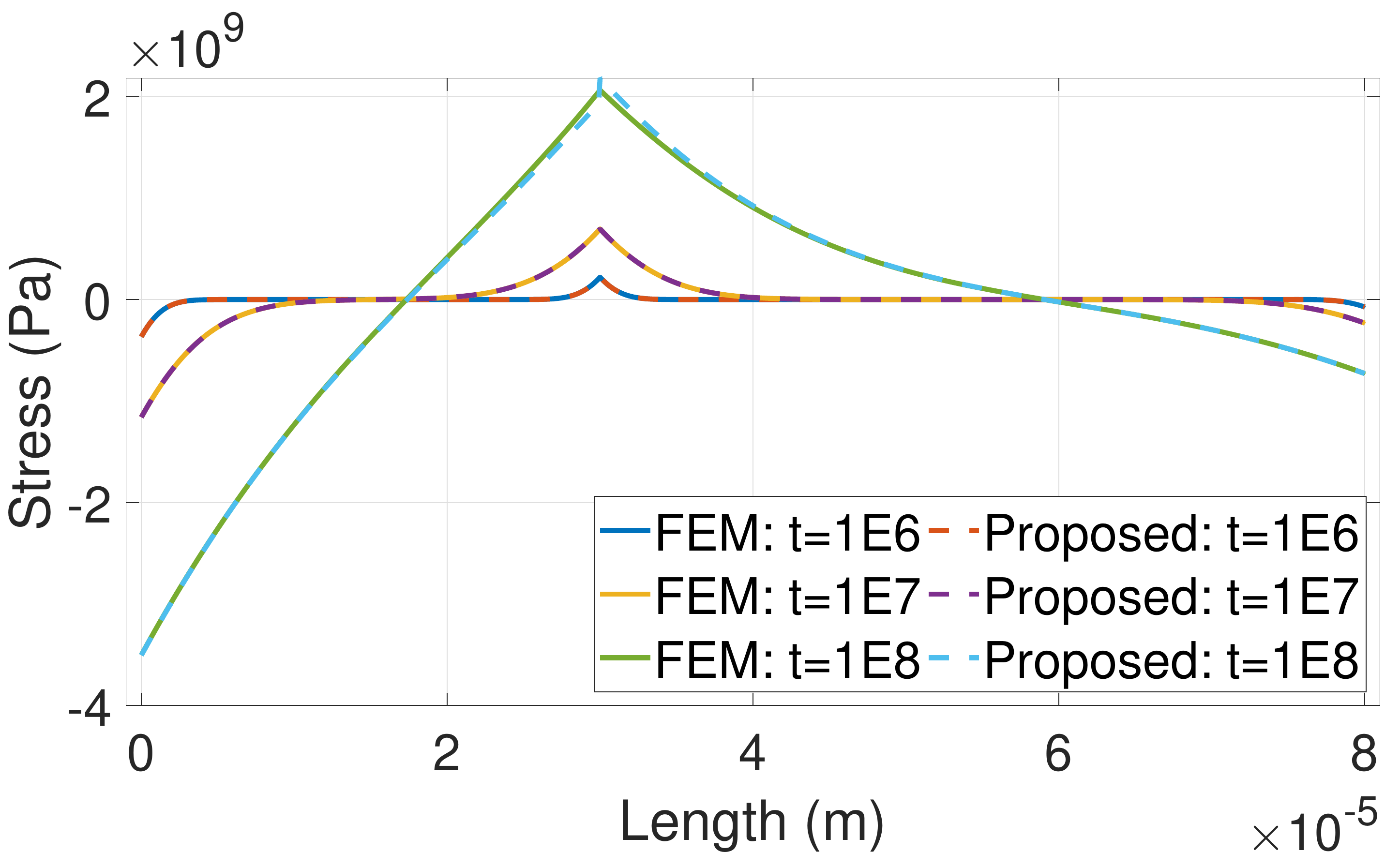}
	\label{fig:pic3}}
	\subfigure[]{
		\includegraphics[width=0.46\columnwidth]{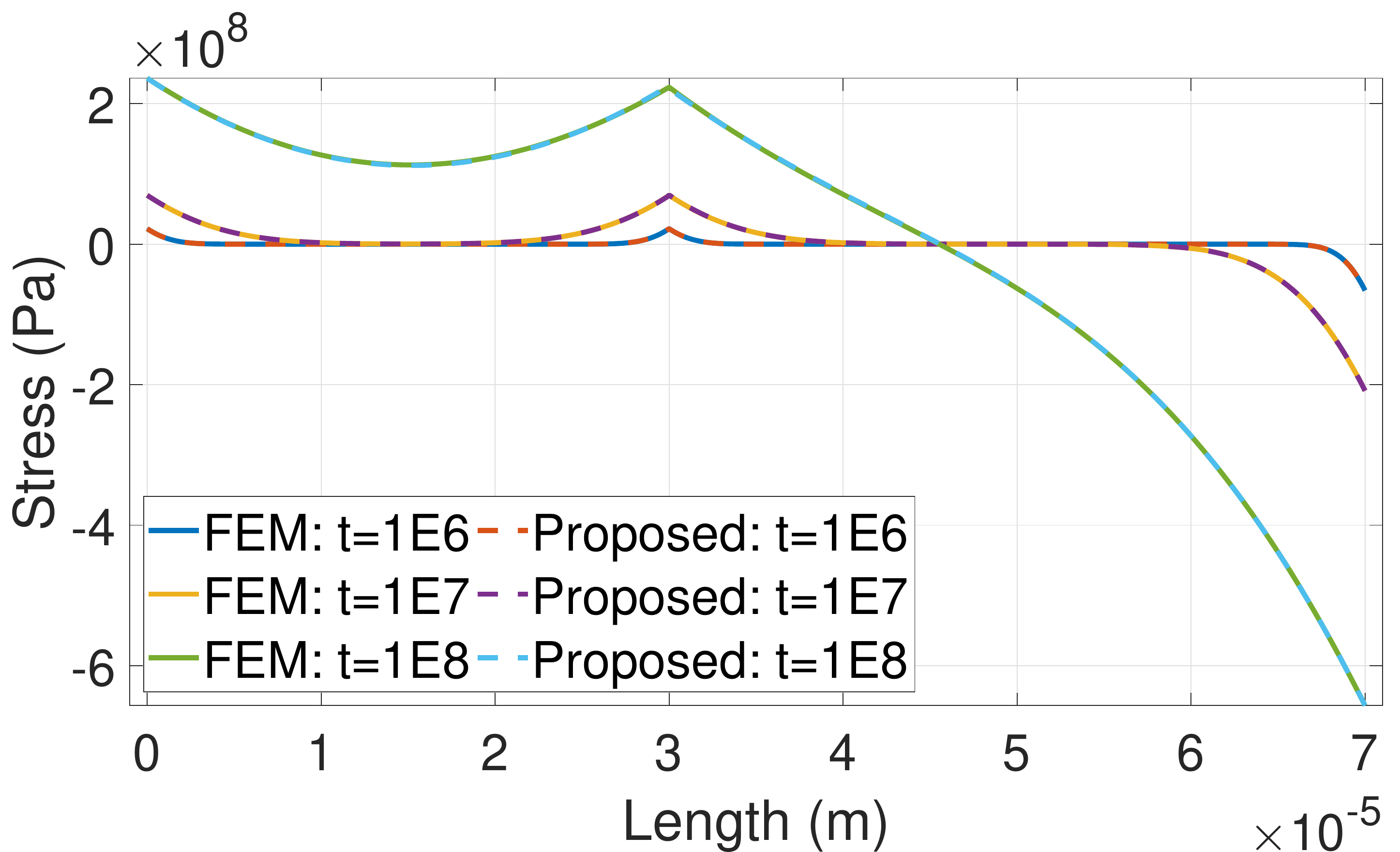}
	\label{fig:pic4}}
	\caption{{\color{black}Stress evolution results of two-segment straight wires: (a) $L_1=20\ \mu m,L_2=30\ \mu m,j_1 = -1\times 10^{10}\ A/m^2,j_2=4\times 10^{10}\ A/m^2$; (b) $L_1=40\ \mu m,L_2=40\ \mu m,j_1 = -2\times 10^{10}\ A/m^2,j_2=3\times 10^{10}\ A/m^2$; (c) $L_1=30\ \mu m,L_2=50\ \mu m,j_1 = -5\times 10^{10}\ A/m^2,j_2=1\times 10^{10}\ A/m^2$; (d) $L_1=30\ \mu m,L_2=40\ \mu m,j_1 = 0.3\times 10^{10}\ A/m^2,j_2=0.9\times 10^{10}\ A/m^2$.}}
\label{fig:parapic}
\end{figure}

}



\section{Conclusion}\label{conclusion}
This work targets the EM reliability problem and proposes a method for obtaining the stress evolution of complex interconnect trees during the void nucleation phase under time-varying temperature. Using multilayer perceptron and a customized objective function, we construct a trial function as the stress prediction expression to solve the physics-based constrained problem and reduce the required training data in the stress modeling. The proposed method reduces the training time compared with the competing learning-based method. We also discuss the importance of considering different widths for each interconnect segment rather than assuming an equal width. Experimental results demonstrate that the proposed method shows significant computational savings over competing schemes with high accuracy. {\color{black}The proposed work focuses on analyzing the EM-induced stress development during the pre-void phase, which is one of the complex EM failure processes of interconnects. By modifying the trial function and customizing a specific objective function for neural network training, we would like to extend our proposed framework to cover follow-up works of EM assessment such as the void growth phase analysis.}




\bibliographystyle{IEEEtran}
\bibliography{reliability}
\section*{Appendix}


\newcounter{mytempeqncnt}
\begin{figure*}[!t]
\normalsize
\setcounter{mytempeqncnt}{\value{equation}}
\setcounter{equation}{28}
\begin{equation}\label{eq:linear4}\footnotesize
\left[
\begin{array}{cccccccc}
ae^{-aL_{i_1}}&-ae^{aL_{i_1}}&0&0&0&0&0&0  \\
0&0&ae^{aL_{i_2}}&-ae^{-aL_{i_2}}&0&0&0&0  \\
0&0&0&0&ae^{-aL_{i_3}}&-ae^{aL_{i_3}}&0&0  \\
0&0&0&0&0&0&ae^{aL_{i_4}}&-ae^{-aL_{i_4}} \\
1&1&-1&-1&0&0&0&0\\
0&0&1&1&-1&-1&0&0\\
0&0&0&0&1&1&-1&-1\\
\kappa a&-\kappa a&-\kappa a&\kappa a&\kappa a&-\kappa a&-\kappa a&\kappa a
\end{array}
\right]
\cdot
\left[
\begin{array}{cccccccc}
A_1  \\
B_1 \\
A_2 \\
B_2 \\
A_3\\
B_3\\
A_4\\
B_4\\
\end{array}
\right]=
\left[
\begin{array}{cccccccc}
K_{i_1}^-  \\
K_{i_2}^+ \\
K_{i_3}^-\\
K_{i_4}^+ \\
0\\
0\\
0\\
-\kappa c_1+\kappa c_2-\kappa c_3 +\kappa c_4\\
\end{array}
\right].
\end{equation}
\begin{equation}\label{eq:ab}\footnotesize
	\begin{aligned}
	A_1=&\frac{-(w_{i_1}+w_{i_2}+w_{i_3}+w_{i_4})K_{i_1}^-e^{-3aL_{i_1}}+(-w_{i_1}c_1+w_{i_2}c_2-w_{i_3}c_3+w_{i_4}c_4)e^{-2aL_{i_1}}-(w_{i_1}c_1-w_{i_2}c_2+w_{i_3}c_3-w_{i_4}c_4)}{(w_{i_1}+w_{i_2}+w_{i_3}+w_{i_4})a(1-e^{-aL_{i_1}})}\\&+\frac{((-w_{i_1}+w_{i_2}+w_{i_3}+w_{i_4})K_{i_1}^-+2w_{i_2}K_{i_2}^+-2w_{i_3}K_{i_3}^-+2w_{i_4}K_{i_4}^+)e^{-aL_{i_1}}}{(w_{i_1}+w_{i_2}+w_{i_3}+w_{i_4})a(1-e^{-aL_{i_1}})},\\
	B_1=&\frac{-(w_{i_1}c_1-w_{i_2}c_2+w_{i_3}c_3-w_{i_4}c_4)e^{-4aL_{i_1}}+(-w_{i_1}c_1+w_{i_2}c_2-w_{i_3}c_3+w_{i_4}c_4)e^{-2aL_{i_1}}-(w_{i_1}+w_{i_2}+w_{i_3}+w_{i_4})K_{i_1}^-e^{-aL_{i_1}}}{(w_{i_1}+w_{i_2}+w_{i_3}+w_{i_4})a(1-e^{-aL_{i_1}})}\\&+\frac{((-w_{i_1}+w_{i_2}+w_{i_3}+w_{i_4})K_{i_1}^-+2w_{i_2}K_{i_2}^+-2w_{i_3}K_{i_3}^-+2w_{i_4}K_{i_4}^+)e^{-3aL_{i_1}}}{(w_{i_1}+w_{i_2}+w_{i_3}+w_{i_4})a(1-e^{-aL_{i_1}})},\\
	A_2=&\frac{-(w_{i_1}c_1-w_{i_2}c_2+w_{i_3}c_3-w_{i_4}c_4)e^{-4aL_{i_2}}+(-w_{i_1}c_1+w_{i_2}c_2-w_{i_3}c_3+w_{i_4}c_4)e^{-2aL_{i_2}}+(w_{i_1}+w_{i_2}+w_{i_3}+w_{i_4})K_{i_2}^+e^{-aL_{i_2}}}{(w_{i_1}+w_{i_2}+w_{i_3}+w_{i_4})a(1-e^{-aL_{i_2}})}\\&-\frac{(2w_{i_1}K_{i_1}^-+(w_{i_1}-w_{i_2}+w_{i_3}+w_{i_4})K_{i_2}^++2w_{i_3}K_{i_3}^--2w_{i_4}K_{i_4}^+)e^{-3aL_{i_2}}}{(w_{i_1}+w_{i_2}+w_{i_3}+w_{i_4})a(1-e^{-aL_{i_2}})},\\
	B_2=&\frac{(w_{i_1}+w_{i_2}+w_{i_3}+w_{i_4})K_{i_2}^+e^{-3aL_{i_2}}+(-w_{i_1}c_1+w_{i_2}c_2-w_{i_3}c_3+w_{i_4}c_4)e^{-2aL_{i_2}}-(w_{i_1}c_1-w_{i_2}c_2+w_{i_3}c_3-w_{i_4}c_4)}{(w_{i_1}+w_{i_2}+w_{i_3}+w_{i_4})a(1-e^{-aL_{i_2}})}\\&-\frac{(2w_{i_1}K_{i_1}^-+(w_{i_1}-w_{i_2}+w_{i_3}+w_{i_4})K_{i_2}^++2w_{i_3}K_{i_3}^--2w_{i_4}K_{i_4}^+)e^{-aL_{i_2}}}{(w_{i_1}+w_{i_2}+w_{i_3}+w_{i_4})a(1-e^{-aL_{i_2}})},\\
	A_3=&\frac{-(w_{i_1}+w_{i_2}+w_{i_3}+w_{i_4})K_{i_3}^-e^{-3aL_{i_3}}+(-w_{i_1}c_1+w_{i_2}c_2-w_{i_3}c_3+w_{i_4}c_4)e^{-2aL_{i_3}}-(w_{i_1}c_1-w_{i_2}c_2+w_{i_3}c_3-w_{i_4}c_4)}{(w_{i_1}+w_{i_2}+w_{i_3}+w_{i_4})a(1-e^{-4aL_{i_3}})}\\&+\frac{(-2w_{i_1}K_{i_1}^-+2w_{i_2}K_{i_2}^++(w_{i_1}+w_{i_2}-w_{i_3}+w_{i_4})K_{i_3}^-+2w_{i_4}K_{i_4}^+)e^{-aL_{i_3}}}{(w_{i_1}+w_{i_2}+w_{i_3}+w_{i_4})a(1-e^{-4aL_{i_3}})},\\
	B_3=&\frac{-(w_{i_1}c_1-w_{i_2}c_2+w_{i_3}c_3-w_{i_4}c_4)e^{-4aL_{i_3}}+(-w_{i_1}c_1+w_{i_2}c_2-w_{i_3}c_3+w_{i_4}c_4)e^{-2aL_{i_3}}-(w_{i_1}+w_{i_2}+w_{i_3}+w_{i_4})K_{i_3}^-e^{-aL_{i_3}}}{(w_{i_1}+w_{i_2}+w_{i_3}+w_{i_4})a(1-e^{-4aL_{i_3}})}\\&+\frac{(-2w_{i_1}K_{i_1}^-+2w_{i_2}K_{i_2}^++(w_{i_1}+w_{i_2}-w_{i_3}+w_{i_4})K_{i_3}^-+2w_{i_4}K_{i_4}^+)e^{-3aL_{i_3}}}{(w_{i_1}+w_{i_2}+w_{i_3}+w_{i_4})a(1-e^{-4aL_{i_3}})},\\	
	A_4=&\frac{-(w_{i_1}c_1-w_{i_2}c_2+w_{i_3}c_3-w_{i_4}c_4)e^{-4aL_{i_4}}+(-w_{i_1}c_1+w_{i_2}c_2-w_{i_3}c_3+w_{i_4}c_4)e^{-2aL_{i_4}}+(w_{i_1}+w_{i_2}+w_{i_3}+w_{i_4})K_{i_4}^+e^{-aL_{i_4}}}{(w_{i_1}+w_{i_2}+w_{i_3}+w_{i_4})a(1-e^{-4aL_{i_4}})}\\&-\frac{(2w_{i_1}K_{i_1}^--2w_{i_2}K_{i_2}^++2w_{i_3}K_{i_3}^-+(w_{i_1}+w_{i_2}+w_{i_3}-w_{i_4})K_{i_4}^+)e^{-3aL_{i_4}}}{(w_{i_1}+w_{i_2}+w_{i_3}+w_{i_4})a(1-e^{-4aL_{i_4}})},\\
	B_4=&\frac{(w_{i_1}+w_{i_2}+w_{i_3}+w_{i_4})K_{i_4}^+e^{-3aL_{i_4}}+(-w_{i_1}c_1+w_{i_2}c_2-w_{i_3}c_3+w_{i_4}c_4)e^{-2aL_{i_4}}-(w_{i_1}c_1-w_{i_2}c_2+w_{i_3}c_3-w_{i_4}c_4)}{(w_{i_1}+w_{i_2}+w_{i_3}+w_{i_4})a(1-e^{-4aL_{i_4}})}\\&-\frac{(2w_{i_1}K_{i_1}^--2w_{i_2}K_{i_2}^++2w_{i_3}K_{i_3}^-+(w_{i_1}+w_{i_2}+w_{i_3}-w_{i_4})K_{i_4}^+)e^{-aL_{i_4}}}{(w_{i_1}+w_{i_2}+w_{i_3}+w_{i_4})a(1-e^{-4aL_{i_4}})},\\
	\end{aligned}
\end{equation}   
\begin{equation}\label{eq:K}\footnotesize
	\begin{aligned}
	&\mathcal{L}(\frac{\partial\sigma_{i_1}}{\partial x_1}\Big|_{x_1=0})=\frac{-w_{i_1}c_1+w_{i_2}c_2-w_{i_3}c_3+w_{i_4}c_4}{w_{i_1}+w_{i_2}+w_{i_3}+w_{i_4}}+\frac{(2(w_{i_2}+w_{i_3}+w_{i_4})K_{i_1}^-+2w_{i_2}K_{i_2}^+-2w_{i_3}K_{i_3}^-+2w_{i_4}K_{i_4}^+)(e^{-aL_{i_1}}-e^{-3aL_{i_1}})}{(w_{i_1}+w_{i_2}+w_{i_3}+w_{i_4})(1-e^{-4aL_{i_1}})},\\
	&\mathcal{L}(\frac{\partial\sigma_{i_2}}{\partial x_2}\Big|_{x_2=0})=\frac{w_{i_1}c_1-w_{i_2}c_2+w_{i_3}c_3-w_{i_4}c_4}{w_{i_1}+w_{i_2}+w_{i_3}+w_{i_4}}+\frac{(2w_{i_1}K_{i_1}^-+2(w_{i_1}+w_{i_3}+w_{i_4})K_{i_2}^++2w_{i_3}K_{i_3}^--2w_{i_4}K_{i_4}^+)(e^{-aL_{i_2}}-e^{-3aL_{i_2}})}{(w_{i_1}+w_{i_2}+w_{i_3}+w_{i_4})(1-e^{-4aL_{i_2}})},\\
	&\mathcal{L}(\frac{\partial\sigma_{i_3}}{\partial x_3}\Big|_{x_3=0})=\frac{-w_{i_1}c_1+w_{i_2}c_2-w_{i_3}c_3+w_{i_4}c_4}{w_{i_1}+w_{i_2}+w_{i_3}+w_{i_4}}+\frac{(-2w_{i_1}K_{i_1}^-+2w_{i_2}K_{i_2}^++2(w_{i_1}+w_{i_2}+w_{i_4})K_{i_3}^-+2w_{i_4}K_{i_4}^+)(e^{-aL_{i_3}}-e^{-3aL_{i_3}})}{(w_{i_1}+w_{i_2}+w_{i_3}+w_{i_4})(1-e^{-4aL_{i_3}})},\\
	&\mathcal{L}(\frac{\partial\sigma_{i_4}}{\partial x_4}\Big|_{x_4=0})=\frac{w_{i_1}c_1-w_{i_2}c_2+w_{i_3}c_3-w_{i_4}c_4}{w_{i_1}+w_{i_2}+w_{i_3}+w_{i_4}}+\frac{(2w_{i_1}K_{i_1}^--2w_{i_2}K_{i_2}^++2w_{i_3}K_{i_3}^-+2(w_{i_1}+w_{i_2}+w_{i_3})K_{i_4}^+)(e^{-aL_{i_4}}-e^{-3aL_{i_4}})}{(w_{i_1}+w_{i_2}+w_{i_3}+w_{i_4})(1-e^{-4aL_{i_4}})}.\\
	\end{aligned}
\end{equation}   
\setcounter{equation}{\value{mytempeqncnt}}
\hrulefill
\vspace*{4pt}
\end{figure*}

\subsection{Derivation of the trial function}\label{trail}
We employ the Laplace transformation technique and use $\hat{\varPsi}_t(x,s)=\mathcal{L}(\varPsi_t(x,t))=\int^{+\infty}_0e^{-st}\varPsi_t(x,t)dt$ to represent the Laplace form of the trial function, so that the diffusion constraint of \eqref{eq:Korhonen's PDE} is converted to an ordinary differential equation

\begin{equation}\label{eq:ode}
\frac{d^2\hat{\varPsi}_t(x,s)}{dx^2}-\frac{s}{\kappa}\hat{\varPsi}_t(x,s)=0,0<x<L.\\
\end{equation}
Based on the characteristic equation method, the general solution of second order homogeneous linear constant equations and IC in \eqref{eq:ic}, we can obtain $\hat{\varPsi}_t(x,s)$ by
\begin{equation} \label{eq:hatsigma}
\hat{\varPsi}_t(x,s)=Ae^{\sqrt{\frac{s}{\kappa}}x}+Be^{-\sqrt{\frac{s}{\kappa}}x},0<x<L.\\
\end{equation}
Here, the coefficients $A,\ B$ are determined by BC in \eqref{eq:Boundary} and we define $k^+(t)=k(t,\theta^+),\ k^-(t)=k(t,\theta^-)$.
Substituting $\hat{\varPsi}_t(x,s)$ from \eqref{eq:hatsigma} into the Laplace form of \eqref{eq:Boundary}, we obtain the following linear well-posed equation
\begin{equation}\label{eq:linearSyst}
\left[
\begin{array}{cccccc}
a&-a  \\
ae^{aL}&-ae^{-aL}  \\
\end{array}
\right]
\cdot
\left[
\begin{array}{cccccc}
A  \\
B \\
\end{array}
\right]=
\left[
\begin{array}{cccccc}
\frac{D^-(s)+k^+(0)}{s}  \\
\frac{D^+(s)+k^-(0)}{s}  \\
\end{array}
\right],
\end{equation}
where $a=\sqrt{s/ \kappa},\ D^-(s)=sK^-(s)-k^-(0),\ D^+(s)=sK^+(s)-k^+(0)$. The notations $K^-(s),\ K^+(s)$ represent the Laplace form of $k^-(t),\ k^+(t)$. Thus, solving the linear system \eqref{eq:linearSyst} yields
\begin{equation}\label{eq:a1b1} \small
\begin{aligned}
A=&\!-\!\frac{D\!^-(s)\!+\!k\!^-(0)}{sa}\frac{e^{-2aL}}{1\!-\!e^{-2aL}}+\frac{D\!^+(s)\!+\!k\!^+(0)}{sa}\frac{e^{-aL}}{1\!-\!e^{-2aL}},\\
B=&\!-\!\frac{D\!^-(s)\!+\!k\!^-(0)}{sa}\frac{1}{1\!-\!e^{-2aL}}+\frac{D\!^+(s)\!+\!k\!^+(0)}{sa}\frac{e^{-aL}}{1\!-\!e^{-2aL}}.
\end{aligned}
\end{equation}

We then employ the complementary error function to construct a basis function $g(x,t)$ in \eqref{eq:basisFunc}. 
In particular, the complementary error function is widely used in the digital communication system, heat equation, etc. 
Coupling spatial functions $\xi_q(n,x,L)(q=1,2,3,4)$ in \eqref{eq:notation} and the basis function $g(x,t)$ in \eqref{eq:basisFunc}, we can obtain the $\varPsi_t(x,t)$ by the inverse Laplace transformation on \eqref{eq:hatsigma} with the known coefficients in \eqref{eq:a1b1}
\begin{equation}\label{eq:convinfty}
	\begin{aligned}
	&\varPsi_t(x,t,L,\theta^-,\theta^+)=\\
	&\sum_{n=0}^{+\infty}\Big(\frac{-dk(t,\theta^-)}{dt}\ast \big(g(\xi_1(n,x,L),t)+g(\xi_3(n,x,L),t)\big)\\
	&-k(0,\theta^-)\times\big(g(\xi_1(n,x,L),t)+g(\xi_3(n,x,L),t)\big)
	\\&+\frac{dk(t,\theta^+)}{dt}\ast \big(g(\xi_2(n,x,L),t)+g(\xi_4(n,x,L),t)\big)\\
	&+k(0,\theta^+)\times \big(g(\xi_2(n,x,L),t)+g(\xi_4(n,x,L),t)\big)\Big).\\
	\end{aligned}
\end{equation}


\subsection{Proof of Theorem \ref{theory1}}\label{proof}
We first write the BCs for stress evolution of the four segments as follows
\begin{equation}\label{eq:restrictions}
    \begin{aligned}
    &\kappa_{i_1}\Big(\frac{\partial\sigma_{i_1}}{\partial x_1}-k_{i_1}^-(t)\Big)=0,x_1=-L_{i_1},t>0\\
    &\kappa_{i_2}\Big(\frac{\partial\sigma_{i_2}}{\partial x_2}-k_{i_2}^+(t)\Big)=0,x_2=L_{i_2},t>0\\
    &\kappa_{i_3}\Big(\frac{\partial\sigma_{i_3}}{\partial x_3}-k_{i_3}^-(t)\Big)=0,x_3=-L_{i_3},t>0\\
    &\kappa_{i_4}\Big(\frac{\partial\sigma_{i_4}}{\partial x_4}-k_{i_4}^+(t)\Big)=0,x_4=L_{i_4},t>0\\
    &\sigma_{i_1}=\sigma_{i_2}=\sigma_{i_3}=\sigma_{i_4},x_1=x_2=x_3=x_4=0,t>0\\
    &w_{i_1}\kappa_{i_1}\Big(\frac{\partial\sigma_{i_1}}{\partial x_1}+G_{i_1}\Big)-w_{i_2}\kappa_{i_2}\Big(\frac{\partial\sigma_{i_2}}{\partial x_2}+G_{i_2}\Big)\\
    &\qquad+w_{i_3}\kappa_{i_3}\Big(\frac{\partial\sigma_{i_3}}{\partial x_3}+G_{i_3}\Big)-w_{i_4}\kappa_{i_4}\Big(\frac{\partial\sigma_{i_4}}{\partial x_4}+G_{i_4}\Big)=0,\\
    &\qquad\qquad\qquad\qquad\qquad x_1=x_2=x_3=x_4=0,t>0
    \end{aligned}
\end{equation}
We then employ the Laplace transformation technique and construct $\mathcal{L}(\sigma_{i_m})=A_me^{\sqrt{s/\kappa}x}+B_me^{-\sqrt{s/\kappa}x}$ respecting Korhonen's equation. The Laplace form of BCs in \eqref{eq:restrictions} yields the linear system in \eqref{eq:linear4}, where $\kappa_{i_1}=\kappa_{i_2}=\kappa_{i_3}=\kappa_{i_4}=\kappa,\ a=\sqrt{s/\kappa}$ and $ c_m=G_{i_m}/s$. The coefficients $A_m,B_m$ are given by \eqref{eq:ab}. Then we substitute \eqref{eq:ab} into the stress gradients at the center node by $\mathcal{L}(\partial \sigma_{i_m}/\partial x_m|_{x_m=0})=aA_m-aB_m$. The stress gradients in Laplace form are shown in \eqref{eq:K}.
The initial stress gradient at the center node can be obtained by employing the initial value theorem of Laplace transformation on \eqref{eq:K}. Moreover, it is known in BCs that the stress gradient satisfies $\partial \sigma_b/\partial x|_{x=x_b}=-G_b$ at the terminal. Finally, the initial stress gradient at nodes of interconnect tree follows \eqref{eq:initial gradient}.



\vspace{-10 mm}

\begin{IEEEbiography}
[{\includegraphics[width=1in,height=1.25in,clip,keepaspectratio]{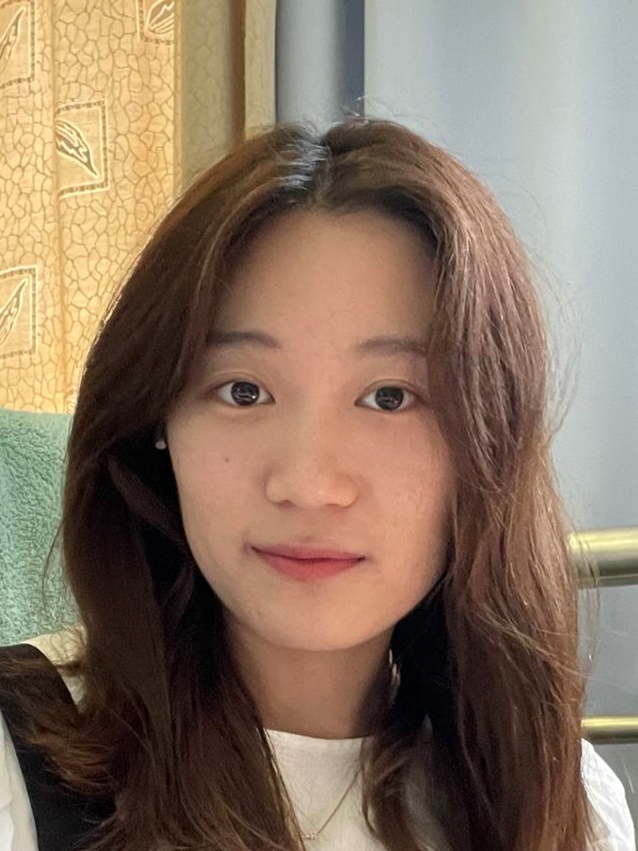}}]
{Tianshu Hou} received the B.S. degree in electronic information science and technology from Sichuan University, Sichuan, China in 2019. She is currently pursuing a Ph.D degree in the Department of Micro/Nano-electronics, Shanghai Jiao Tong University, Shanghai, China. Her current research interests include electromigration reliability modeling, assessment and optimization.
\end{IEEEbiography}

\vspace{-15 mm}
\begin{IEEEbiography}
[{\includegraphics[width=1in,height=1.2in,clip,keepaspectratio]{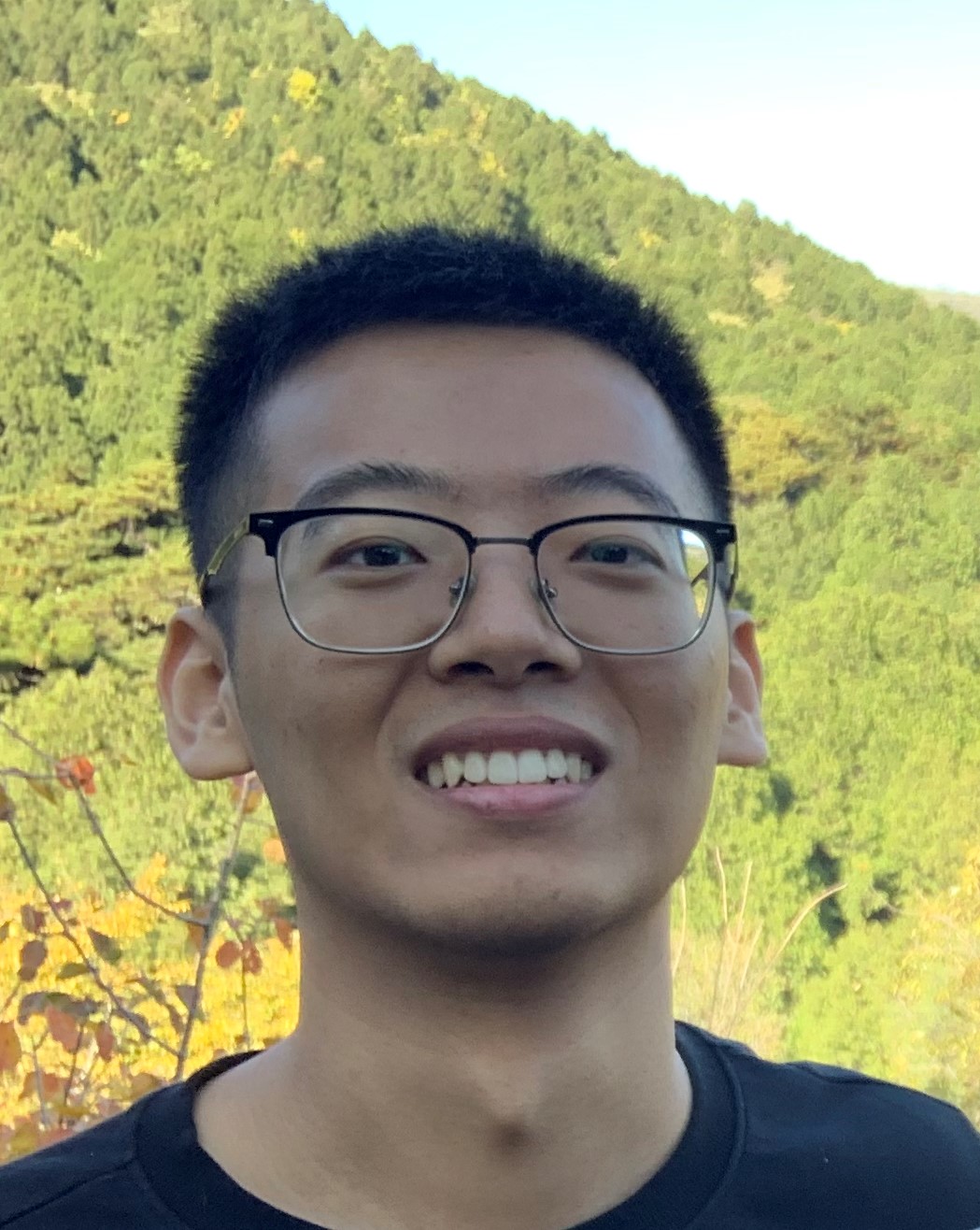}}]
{Peining Zhen}received the B.S. degree in electronic engineering from Sichuan University, Chengdu, China, in 2017. He is currently pursuing the Ph.D degree with the Department of Micro/Nano-Electronics, Shanghai Jiao Tong University, Shanghai, China. His current research interests include machine learning and neuromorphic computing.  
\end{IEEEbiography}

\vspace{-15 mm}
\begin{IEEEbiography}
[{\includegraphics[width=1in,height=1.25in,clip,keepaspectratio]{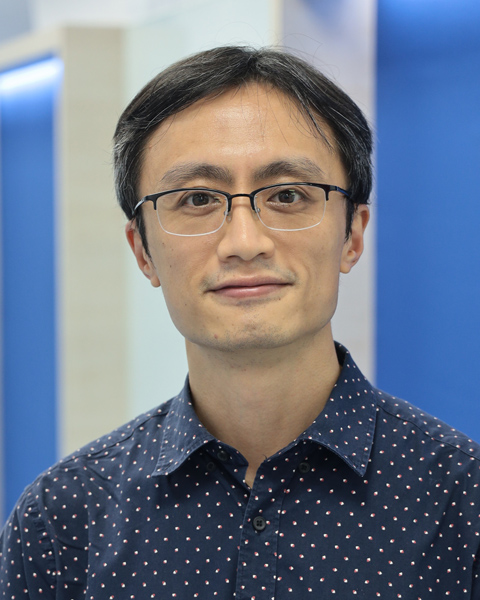}}]{Ngai Wong}  (SM, IEEE) received his B.Eng in 1999 and Ph.D. in EEE from The University of Hong Kong (HKU) in 2003, and he was a visiting scholar with Purdue University, West Lafayette, IN, in 2003. He is currently an Associate Professor with the Department of Electrical and Electronic Engineering at HKU. His research interests include electronic design automation (EDA), model order reduction, tensor algebra, linear and nonlinear modeling \& simulation, and compact neural network design. 
\end{IEEEbiography}

\vspace{-15 mm}

\begin{IEEEbiography}
[{\includegraphics[width=1in,height=1.25in,clip,keepaspectratio]{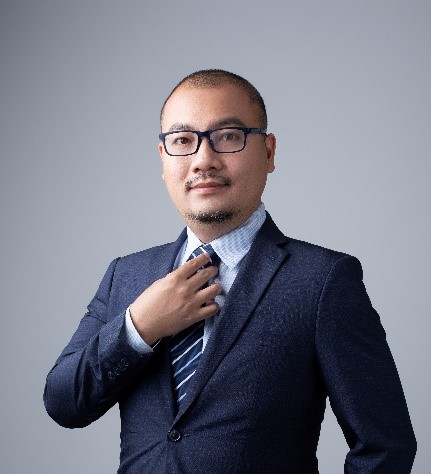}}]{Quan Chen} (S'09-M'11) received his B.S. degree in Electrical Engineering from the Sun Yat-Sen University, China, in 2005 and the M.Phil. and Ph.D. degree in Electronic Engineering from The University of Hong Kong, Hong Kong, in 2007 and 2010. From 2010-2011 he was postdoctoral fellow at the department of Computer Science and Engineering of the University of California, San Diego (UCSD). In 2012-2018, he was a research assistant professor at the department of Electrical and Electronic Engineering, The University of Hong Kong (HKU). He joined the Southern University of Science and Technology (SUSTech) in Shenzhen, China in 2019, where he is an assistant professor now. 
His research interests include ultra-large-scale circuit simulation and multi-physics analysis in the field of electronic design automation (EDA), as well as EDA techniques for emerging technologies such as sub-10nm devices, memristors, and quantum computing. He also has years of experience in technical transformation and commercialization.

\end{IEEEbiography}

\vspace{-15 mm}

\begin{IEEEbiography}
[{\includegraphics[width=1in,height=1.25in,clip,keepaspectratio]{GuoyongShi_p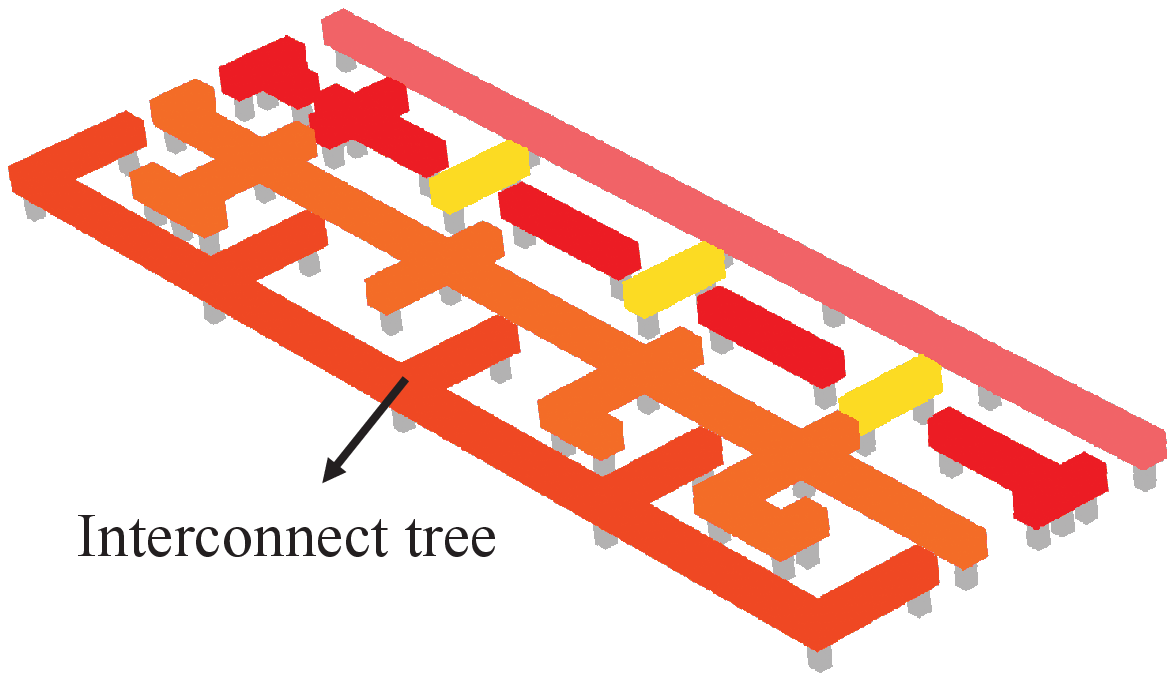}}]{Guoyong Shi} (S'99-M'02-SM'11)
received the B.S. in applied mathematics from Fudan University, Shanghai, China, the M.S. degree in electronics and information science from Kyoto Institute of Technology, Kyoto, Japan, and the Ph.D. degree in electrical engineering from Washington State University, Pullman, in 1987, 1997, and 2002, respectively.

He is now a Professor of Microelectronics in Shanghai Jiao Tong University in Shanghai, China.
His research interests include design automation of analog/mixed-signal integrated circuits and systems.
He has published about 100 research papers in technical journals and conferences.
He is co-author of the book \emph{Advanced Symbolic Analysis for VLSI Systems --
Methods and Applications} published by Springer in 2014.
He has served several technical program committees including ASPDAC and SMACD.
He currently serves on the editorial board of Integration, the VLSI journal.
Dr. Shi was co-recipient of the Donald O. Pederson Best Paper Award in 2007.
\end{IEEEbiography}

\vspace{-10 mm}

\begin{IEEEbiography}
[{\includegraphics[width=1in,height=1.25in,clip,keepaspectratio]{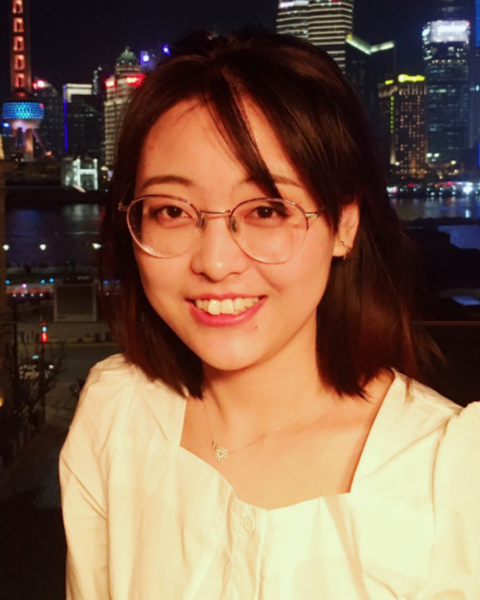}}]
{Shuqi Wang} received the B.Eng. degree in  Microelectronics Science and Engineering from University of Electronic Science and Technology of China, Sichuan, China in 2020. She is currently pursuing a master degree in the Department of Micro/Nano-electronics, Shanghai Jiao Tong University, Shanghai, China. Her research interests include machine learning and neuromorphic computing.
\end{IEEEbiography}

\vspace{-15 mm}

\begin{IEEEbiography}
[{\includegraphics[width=1in,height=1.25in,clip,keepaspectratio]{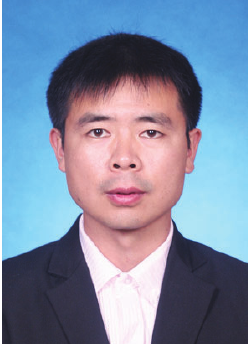}}]
{Hai-Bao Chen} received the B.S. degree in information and computing sciences, and the M.S. and Ph.D. degrees in applied mathematics from Xian Jiaotong University, Xian, China, in 2006, 2008, and 2012, respectively. He then joined Huawei Technologies, where he focused on cloud computing and big data. He was a Post-Doctoral Research Fellow with Electrical Engineering Department, University of California, Riverside, CA, USA, from 2013 to 2014. He is currently an Associate Professor in the Department of Micro/Nano-electronics, Shanghai Jiao Tong University, Shanghai, China. His current research interests include VLSI reliability, machine learning and neuromorphic computing, numerical analysis and modeling for VLSIs, integrated circuit for signal and control systems. Dr. Chen has authored or co-authored about 70 papers in scientific journals and conference proceedings. He received one Best Paper Award nomination from Asia and South Pacific Design Automation Conference (ASP-DAC) in 2015. Since 2016, Dr. Chen serves as an Associate Editor for Integration-the VLSI Journal.
\end{IEEEbiography}

\end{document}